\documentclass[10pt,letterpaper]{article}

\usepackage{ccn}
\usepackage{pslatex}
\usepackage{apacite}
\usepackage{natbib}
\usepackage[switch]{lineno}
\usepackage{amsmath,amsthm,amssymb,amsfonts}
\usepackage[dvipsnames]{xcolor}
\usepackage[colorlinks=true,allcolors=NavyBlue]{hyperref}
\usepackage{widetext}
\usepackage{graphicx}

\newcommand{\E}{\mathbb{E}}

\newcommand{\R}{\mathbb{R}}
\newcommand{\vr}[1]{\mathbf{#1}}

\renewcommand{\d}{\partial}

\newcommand{\Bx}[1]{\left\lbrace#1\right\rbrace}
\newcommand{\Px}[1]{\left(#1\right)}
\newcommand{\Hx}[1]{\left[#1\right]}
\newcommand{\Nx}[1]{\left|\left|#1\right|\right|}
\DeclareMathOperator*{\argmax}{arg\,max}
\DeclareMathOperator*{\argmin}{arg\,min}

\newtheorem{theorem}{Theorem}

\renewcommand{\linenumbers}{\nolinenumbers}

\title{Learning richness modulates equality reasoning in neural networks}
 
\author{
Double blind review}

\author{ {\large \bf William L. Tong \& Cengiz Pehlevan} \\
\texttt{\{wtong@g,cpehlevan@seas\}.harvard.edu} \vspace{0.5em} \\
School of Engineering and Applied Sciences \\
Center for Brain Sciences \\
Kempner Institute for the Study of Artificial and Natural Intelligence \\
Harvard University, Cambridge, MA 02138
}

\begin{document}
\linenumbers

\maketitle

\section*{Abstract}
{
Equality reasoning is ubiquitous and purely abstract: sameness or difference may be evaluated no matter the nature of the underlying objects. As a result, same-different (SD) tasks have been extensively studied as a starting point for understanding abstract reasoning in humans and across animal species. With the rise of neural networks that exhibit striking apparent proficiency for abstractions, equality reasoning in these models has also gained interest. Yet despite extensive study, conclusions about equality reasoning vary widely and with little consensus. To clarify the underlying principles in learning SD tasks, we develop a theory of equality reasoning in multi-layer perceptrons (MLP). Following observations in comparative psychology, we propose a spectrum of behavior that ranges from \textit{conceptual} to \textit{perceptual} outcomes. Conceptual behavior is characterized by task-specific representations, efficient learning, and insensitivity to spurious perceptual details. Perceptual behavior is characterized by strong sensitivity to spurious perceptual details, accompanied by the need for exhaustive training to learn the task. We develop a mathematical theory to show that an MLP's behavior is driven by \textit{learning richness}. Rich-regime MLPs exhibit conceptual behavior, whereas lazy-regime MLPs exhibit perceptual behavior. We validate our theoretical findings in vision SD experiments, showing that rich feature learning promotes success by encouraging hallmarks of conceptual behavior. Overall, our work identifies feature learning richness as a key parameter modulating equality reasoning, and suggests that equality reasoning in humans and animals may similarly depend on learning richness in neural circuits.
}
\begin{quote}
\small
\textbf{Keywords:} 
equality reasoning; same-different; neural network; conceptual and perceptual behavior
\end{quote}

\section{Introduction}
The ability to reason abstractly is a hallmark of human intelligence. Fluency with abstractions drives both our highest intellectual achievements and many of our daily necessities like telling time, navigating traffic, and planning leisure. At the same time, neural networks have grown tremendously in sophistication and scale. The latest examples exhibit increasingly impressive competency, and the potential to automate the reasoning process itself seems imminent \citep{o1,gpt4,Bubeck2023_sparks_of_agi,guo2025deepseek_r1}. Nonetheless, it remains unclear to what extent these models are able to reason abstractly, and how consistently they behave \citep{McCoy2023-embers,Mahowald2024-dissociate,Ullman2023-fail}. To begin answering these questions, we require a principled understanding of how neural networks can reason.

A particularly simple and salient form of abstract reasoning is \textit{equality reasoning}: determining whether two objects are the same or different. The ``sense of sameness is the very keel and backbone of our thinking," \citep{james1905principles_of_psych} promoting its study as a tractable viewport into abstract reasoning across humans and animals \citep{Wasserman2010_same_diff}. Despite many decades of study, the history of equality reasoning abounds with widely varying conclusions. Success at same-different (SD) tasks have been documented in a large number of animals, including non-human primates \citep{Vonk2003-primate}, honeybees \citep{Giurfa2001-bee}, pigeons \citep{Wasserman2010_same_diff}, crows \citep{Smirnova2015-crow}, and parrots \citep{Obozova2015-parrot}. Others, however, have argued that animals employ perceptual shortcuts to solve these tasks like using stimulus variability, and lack a true conception of sameness or difference \citep{Penn2008-darwin_mistake}. Competence at equality reasoning may require exposure to language or some form of symbolic training \citep{Premack1983-codes}. Meanwhile, pre-lingual human infants have demonstrated sensitivity to same-different relations \citep{Marcus1999-rule_learning,Saffran2003-infant,Rabagliati2019-infant_abstract_rule_learning}.

Equality reasoning in neural networks is no less debated. \citet{Marcus1999-rule_learning} discovered that seven-month-old infants succeed at an SD task where neural networks fail, launching a lively debate that continues to present day \citep{Seidenberg1999-infants_grammar,Seidenberg1999-rules,Alhama2019-review}. Others have demonstrated severe shortcomings in neural networks directed to solve visual same-different reasoning and relational tasks \citep{Kim2018-not_so_clevr,Stabinger2021-evaluating,Vaishnav2022-computational_demands,Webb2023-relational_bottleneck}. Such failures motivate a growing literature in bespoke architectural advancements geared towards relational reasoning \citep{Webb2023-relational_bottleneck,Webb2020-emergent,Santoro2017-relational_module,Battaglia2018-graph}. At the same time, modern large language models routinely solve complex reasoning problems \citep{Bubeck2023_sparks_of_agi}. Their surprising success tempers earlier categorical claims against neural networks' reasoning abilities. Even simple models like multi-layer perceptrons (MLPs) have recently been shown to solve equality and relational reasoning tasks with surprisingly efficacy \citep{geiger_nonsym,Tong2024-mlps}.

The lack of consensus on equality reasoning in either organic or silicate brains speaks to the need for a stronger theoretical foundation. To this end, we present a theory of equality reasoning in MLPs that highlights the central role of a hitherto overlooked parameter: \textit{learning richness}, a measure of how much internal representations change over the course of training \citep{chizat2019lazy_rich}. We find that MLPs in a rich learning regime exhibit \textit{conceptual behavior}, where they develop salient, \textit{conceptual} representations of sameness and difference, learn the task from few training examples, and remain largely insensitive to spurious perceptual details. In contrast, lazy regime MLPs exhibit \textit{perceptual behavior}, where they solve the task only after exhaustive training and show strong sensitivity to perceptual variations. Our specific contributions are the following.

\paragraph{Contributions}
\begin{itemize}
    \item We hand-craft a solution to our same-different task that is expressible by an MLP, demonstrating the possibility for our model to solve this task. Our solution suggests what conceptual representations may look like, guiding subsequent analysis.
    \item We argue that an MLP trained in a rich feature learning regime attains the hand-crafted solution, and exhibits three hallmarks of conceptual behavior: conceptual representations, efficient learning, and insensitivity to spurious perceptual details.
    \item We prove that an MLP trained in a lazy learning regime can also solve an equality reasoning task, but exhibits perceptual behavior: it requires exhaustive training data and shows strong sensitivity to spurious perceptual details.
    \item We extend our results to same-different tasks with noise, calculating Bayes optimal performance under priors that either generalize to arbitrary inputs or memorize the training set. We demonstrate that rich MLPs attain Bayes optimal performance under the generalizing prior.
    \item We validate our results on complex visual SD tasks, showing that our theoretical predictions continue to hold.
\end{itemize}
Our theory clarifies the understudied role of learning richness in driving successful reasoning, with potential implications for both neural network design and animal cognition.

\subsection{Related work}
In studying same-different tasks comparatively across animal species, \citet{Wasserman2017-perceptual_to_conceptual} observe a continuum between perceptual and conceptual behavior. Some animals focus on spurious perceptual details in the task stimuli like image variability, and slowly gain competence through exhaustive repetition. Other animals and humans appear to develop a conceptual understanding of sameness, allowing them to learn the task quickly and ignore irrelevant percepts. Many others fall somewhere in between, exhibiting behavior with both perceptual and conceptual components. These observations lend themselves to a theory where representations and learning mechanisms operate over a continuous domain \citep{carstensen_graded_abs}.

Neural networks offer a natural instantiation of such a continuous theory. However, the extent to which neural networks can reason at all remains a hotly contested topic. Famously, \citet{fodor_and_pylyshyn} argue that connectionist models are poorly equipped to describe human reasoning. \citet{marcus1998rethinking} further contends that neural networks are altogether incapable of solving many simple symbolic tasks (see also \citet{Marcus1999-rule_learning,Marcus2003-algebraic,Marcus2020-next_decade}). \citet{boix_abbe} have also argued that MLPs are unable to generalize on relational problems like our same-different task, though this finding has been contested \citep{Tong2024-mlps,geiger_nonsym}.

Negative assertions about neural network reasoning appear to weaken when considering modern LLMs, which routinely solve complex math and logic problems \citep{Bubeck2023_sparks_of_agi,o1,guo2025deepseek_r1}. But even here, doubts remain about whether LLMs truly reason or merely reproduce superficial aspects of their enormous training set \citep{McCoy2023-embers,Mahowald2024-dissociate}. Nonetheless, \citet{geiger_nonsym} found that simple MLPs convincingly solve same-different tasks after moderate training. \citet{Tong2024-mlps} further showed that MLPs solve a wide variety of relational reasoning tasks. We support these findings by arguing that MLPs solve a same-different task, but their performance is modulated by learning richness. In resonance with \citet{Wasserman2017-perceptual_to_conceptual}, varying richness pushes an MLP along a spectrum between a perceptual and conceptual solutions to the task.

Learning richness itself refers to the degree of change in a neural network's internal representations during training. A number of network parameters were recently discovered to control learning richness, including the readout scale, initialization scheme, and learning rate \citep{chizat2019lazy_rich,woodworth2020kernel,yang2021feature_learning,bordelon_self_consistent}. In the brain, learning richness may correspond to forming adaptive representations that encode task-specific variables, in contrast to fixed representations that remain task agnostic \citep{Farrell2023-lazy_rich}.
Studies have used learning richness to understand the neural representations underlying diverse phenomena like context-dependent decision making \citep{Flesch2022-orthogonal}, multitask cognition \citep{Ito2023-multitask}, generalizing knowledge to new tasks \citep{johnston2023abstract_rep}, and even consciousness \citep{Mastrovito2024-consciousness}.

\section{Setup} \label{sec:setup}
We consider the following same-different task, inspired by the setup in \citet{geiger_nonsym}. The task consists of input pairs $\vr{z}_1, \vr{z}_2 \in \R^d$, where $\vr{z}_i = \vr{s}_i + \vr{\eta}_i$. The labeling function $y$ is given by
\[y(\vr{z}_1, \vr{z}_2) = \begin{cases}
    1 & \vr{s}_1 = \vr{s}_2 \\
    0 & \vr{s}_1 \neq \vr{s}_2
\end{cases} \,.\]
Quantities $\vr{s}$ correspond to ``symbols," perturbed by a small amount of noise $\vr{\eta}$. Noise is distributed as $\vr{\eta} \sim \mathcal{N}(\vr{0}, \sigma^2 \vr{I}/ d)$, for some choice of $\sigma^2$. Initially we will take $\sigma^2 = 0$ so $\vr{z} = \vr{s}$, but we will allow $\sigma^2$ to be nonzero when considering a noisy extension to the task. Our definition of equality implies exact identity, up to possible noise. Other commonly studied variants include equality up to transformation \citep{fleuret2011svrt}, hierarchical equality \citep{Premack1983-codes}, context-dependent equality \citep{raven2003raven_prog_mats}, among many others. We pursue exact identity for its tractability and ubiquity in the literature, and investigate more general notions of equality later with experiments in the noisy case and in vision tasks.

The model consists of a two-layer MLP without bias parameters
\begin{equation} \label{eq:model}
f(\vr{x}) = \frac{1}{\gamma \sqrt{d}} \sum_{i=1}^m a_i \, \phi (\vr{w}_i \cdot \vr{x}) \,,
\end{equation}
where $\phi$ is a ReLU activation applied point-wise to its inputs. We use the standard logit link function to produce predictions $\hat{y} = 1 / (1 + e^{-f})$. Inputs are concatenated as $\vr{x} = (\vr{z}_1; \vr{z}_2) \in \R^{2d}$ before being passed to $f$. The model is trained using binary cross entropy loss with a learning rate $\alpha = \gamma^2 d\, \alpha_0$, for a fixed $\alpha_0$. Hidden weight vectors are initialized as $\vr{w}_i \sim \mathcal{N}(\vr{0}, \vr{I}/m)$, and readouts as $a_i \sim \mathcal{N}(0, 1/m)$. To enable interpolation between rich and lazy learning regimes, the MLP is centered such that $f(\vr{x}) = 0$ at initialization, for all inputs $\vr{x}$. We use a standard procedure for centering, described in Appendix \ref{sec:rich_and_lazy_scaling}. Occasionally, we gather all readouts $a$ and hidden weights $\vr{w}$ into a single set $\vr{\theta}$, and write $f(\vr{x}; \vr{\theta})$ to mean an MLP $f$ parameterized by weights $\vr{\theta}$. We avoid considering bias parameters to simplify the analysis. In practice, because the task is symmetric about the origin, we find that bias plays little role.

The parameter $\gamma$ controls \textit{learning richness}, where higher values of $\gamma$ correspond to greater richness \citep{chizat2019lazy_rich,geiger2020disentangling_feature_and_lazy,woodworth2020kernel,bordelon_self_consistent}. A neural network trained in a \textit{rich regime} experiences significant changes to its hidden activations $\phi(\vr{w_i \cdot \vr{x}})$, resulting in task-specific representations. In contrast, a neural network trained in a \textit{lazy regime} retains task-agnostic representations determined by their initialization. The limit $\gamma \rightarrow 0$ induces lazy behavior. Increasing $\gamma$ increases learning richness. For our tasks, we find that $\gamma = 1$ produces sufficiently rich learning\footnote{$\gamma = 1$ produces a scaling that is  similar to $\mu$P or mean-field parametrization common elsewhere in the rich learning literature \citep{yang2021feature_learning,mei2018mean_field,rotskoff2022trainability}. However, these scalings technically consider an infinite \textit{width} limit. Our setting considers an infinite \textit{input dimension} limit \citep{biehl1995learning_online_gd,saad_learning_soft_comm,goldt2019dynamics}, resulting in an extra $1 / \sqrt{d}$ prefactor that is not present in these other scalings.}, and increasing $\gamma$ beyond 1 does not qualitatively change our results (Figure \ref{fig:rich}). Appendix \ref{sec:rich_and_lazy_scaling} elaborates on our scaling scheme.

Crucially, the training set consists of a finite number of symbols $\vr{s}_1, \vr{s}_2, \ldots, \vr{s}_L$. These $L$ symbols are sampled before training begins as $\vr{s} \sim \mathcal{N}(\vr{0}, \vr{I}/d)$, then used exclusively to train the model. Training examples are balanced such that half consist of \textit{same} examples and half consist of \textit{different} examples. During testing, symbols $\vr{s}$ are sampled afresh for every input, measuring the model's ability to generalize on \textit{unseen} test examples. If a model has learned equality reasoning, then it should attain perfect test accuracy despite having never witnessed the particular inputs. When $\sigma^2 = 0$, this procedure is precisely equivalent to using one-hot encoded symbol inputs with a fixed embedding matrix, where the model is trained on a subset of all possible symbols. Additional details on our model and setup are enumerated in Appendix \ref{sec:model_task_details}.

\subsection{Conceptual and perceptual behavior}
Central to our framework is the distinction between conceptual and perceptual behavior. Conceptual behavior refers to a facility with abstract concepts, enabling the reasoner to learn an abstract task quickly and generalize with limited dependency on spurious details. Perceptual behavior refers to the opposite, where the reasoner solves a task through sensory association. Such learning is typically characterized by exhaustive training and marked sensitivity to spurious perceptual details.

We posit that learning richness moves an MLP between conceptual and perceptual behavior. We identify three specific characteristics of a conceptual outcome:
\begin{enumerate}
    \item \textbf{Conceptual representations.} We look for evidence of task-specific representations that denote sameness or difference. Such representations should be crucial to solving the task, and contribute towards the model's efficiency and insensitivity to spurious perceptual details (below).
    \item \textbf{Efficiency.} We measure learning efficiency using the number of different symbols $L$ observed during training. A conceptual reasoner should solve the task with a smaller $L$ than a perceptual reasoner.
    \item \textbf{Insensitivity to spurious perceptual details.} Spurious perceptual details refer to aspects of the task that influence the input but not the correct output. A readily measurable example is the input dimension
    $d$. Sameness or difference can be evaluated regardless of $d$. A conceptual reasoner should perform equally well when training on tasks across a variety of $d$, whereas a perceptual reasoner may find certain $d$ harder to learn with than others. We therefore evaluate this insensitivity by comparing the test accuracy of models trained across a large range of input dimensions.
\end{enumerate}
A perceptual solution is characterized by the negation of each point: it does not develop task-specific representations, it requires a large $L$ to solve the task, and test accuracy changes substantially with $d$. While potentially possible to have a mixed solution that exhibits a subset of these points, we do not observe them in practice, and the conceptual/perceptual distinction is sufficiently descriptive of our model.

\begin{figure*}[h]
    \centering
    \includegraphics[width=\linewidth]{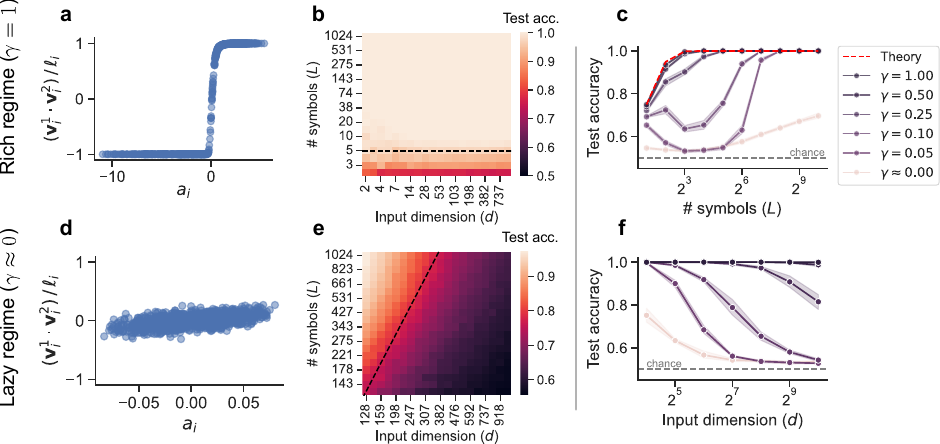}
    \caption{
    \textbf{Rich and lazy regime simulations.} We confirm our theoretical predictions with numeric simulations.  \textbf{(a)}. Hidden weight alignment plotted against readout weights for a rich model. Weights become parallel or antiparallel, with generally higher magnitudes among negative readouts. \textbf{(b)} Test accuracy across different input dimensions and training symbols for a rich model ($m = 4096$). Accuracy is not affected by input dimension.  \textbf{(c)} Test accuracy across different numbers of training symbols, for varying learning richness ($d = 256$, $m = 1024$). Richer models attain high performance with substantially fewer training symbols. The theoretically predicted rich test accuracy shows excellent agreement with our richest model. Finer-grain validation is plotted in Figure \ref{fig:rich}. 
    \textbf{(d)} Hidden weight alignment plotted against readout weights for a lazy model. There is some correlation between alignment and readout weight, but the weights are nowhere near as close to being parallel/antiparallel as in the rich regime. \textbf{(e)} Test accuracy across input dimensions and training symbols for a lazy model ($m = 4096$). Accuracy is substantially affected by input dimension. Theory predicts that the number of training symbols required to maintain high accuracy scales at worst as $L \propto d^2$, plotted in black. \textbf{(f)} Test accuracy across different input dimensions, for varying learning richness ($L = 16, m = 1024$). Richer models show less performance decay with increasing dimension. \textbf{(all)} Results are computed across six runs. Shading corresponds to empirical 95 percent confidence intervals.
    }
    \label{fig:rich_lazy}
\end{figure*}

\section{Same-different task analysis}
We present our analysis of the SD task. We first hand-craft a solution that is expressible by our MLP, and in the process suggest what conceptual representations of sameness and difference may look like (Section \ref{sec:normative_solution}). We proceed to argue that a rich-regime MLP attains the hand-crafted solution through training. It leverages its conceptual representations to learn the task with few training symbols and insensitivity to the input dimension (Section \ref{sec:rich_regime}), exhibiting conceptual behavior. In contrast, a lazy-regime MLP is unable to adapt its representations to the task, and consequently incurs a high training cost and substantial sensitivity to input dimension (Section \ref{sec:lazy_regime}), exhibiting perceptual behavior. In an extension to a noisy version of our task, we show that a rich MLP approaches Bayes optimal performance under a generalizing prior across different noise variance $\sigma^2$ (Section \ref{sec:lazy_regime}). We validate our results on more complex, image-based tasks in Section \ref{sec:experiments}, and discuss broader implications in Section \ref{sec:discussion}.

\subsection{Hand-crafted solution} \label{sec:normative_solution}
To establish whether an MLP can solve the same-different task at all, we first outline a hand-crafted solution using $m = 4$ hidden units. Let $\vr{1} = (1, 1, \ldots, 1) \in \R^d$. Define the weight vector $\vr{w}_1^+$ by concatenation: $\vr{w}_1^+ = (\vr{1}; \vr{1}) \in \R^{2d}$. Further define $\vr{w}_2^+ = (-\vr{1}; -\vr{1})$, $\vr{w}_1^- = (\vr{1}; -\vr{1})$, and $\vr{w}_2^- = (-\vr{1}; \vr{1})$. Let $a^+ = 1$ and $a^- = \rho$, for some value $\rho > 0$. Our MLP is given by
\begin{align}
f(\vr{x}) =\,\, &a^+ \big(\phi(\vr{w}_1^+ \cdot \vr{x}) + \phi(\vr{w}_2^+ \cdot \vr{x})\big) \nonumber \\
- &a^- \big( \phi(\vr{w}_1^- \cdot \vr{x}) + \phi(\vr{w}_2^- \cdot \vr{x})\big)\,. \label{eq:hand_craft_solution}
\end{align}
Note that the weight vectors $\vr{w}_1^+, \vr{w}_2^+$, which correspond to the positive readout $a^+$, are \textit{parallel}: their components point in the same direction with the same magnitude. Meanwhile, weight vectors $\vr{w}_1^-, \vr{w}_2^-$ corresponding to the negative readout $a^-$ are \textit{antiparallel}: their components point in exact opposite directions with the same magnitude. Only the sign of $f$ impacts the classification, so we assign $a^+ = 1$ and $a^- = \rho$ to represent the relative magnitude of $a^-$ against $a^+$.

To see how this weight configuration solves the same-different task, suppose we receive a \textit{same} example $\vr{x} = (\vr{z}, \vr{z})$. Plugging this into Eq \eqref{eq:hand_craft_solution} reveals that the negative terms vanish through our antiparallel weights, leaving $f(\vr{x}) = 2\,|\vr{1} \cdot \vr{z}| > 0$, correctly classifying this example.

Now suppose we receive a \textit{different} example $\vr{x}' = (\vr{z}, \vr{z}')$. Recall that these quantities are sampled independently as $\vr{z}, \vr{z}' \sim \mathcal{N}(\vr{0}, \vr{I}/d)$. As a result, we can no longer rely on a convenient cancellation. The quantity $\phi(\vr{w}_1^+ \cdot \vr{x}') + \phi(\vr{w}_2^+ \cdot \vr{x}')$ is equal in distribution to the quantity $\phi(\vr{w}_1^- \cdot \vr{x}') + \phi(\vr{w}_2^- \cdot \vr{x}')$, with respect to the randomness in $\vr{x}'$. Hence, to implement a consistent negative classification, we need to raise the relative magnitude $\rho$ of our negative readout weight. Indeed, we calculate $p(f(\vr{x}') < 0) = \frac{2}{\pi} \tan^{-1} (\rho)$, which approaches $1$ for $\rho \gg 1$.\footnote{Full details are recorded in Appendix \ref{sec:hand_craft_details}.} Hence, by maintaining a large negative readout, we classify negative examples correctly with high probability. An illustration of this solution is provided in Figure \ref{fig:illustrate}.

An MLP need not implement this precise weight configuration to solve the SD task. Rather, our hand-crafted solution suggests two general conditions:
\begin{enumerate}
    \item \textbf{Parallel/antiparallel weight vectors}. Weights associated with positive readouts must be parallel, and weight associated with negative readouts must be antiparallel. This allows us to classify any \textit{same} example by canceling the contribution from negative readouts.
    \item \textbf{Large negative readouts}. The cumulative magnitude of the negative readouts must be larger than that of the positive readouts. This allows us to classify any \textit{different} example by raising the contribution from negative readouts.
\end{enumerate}
Observe also that parallel and antiparallel weights are suggestive of conceptual representations for sameness and difference. Parallel weights contribute to a \textit{same} classification, and exemplify the structure of a \textit{same} example: the two components point in the same direction. Antiparallel weights contribute to a \textit{different} classification, and exemplify the structure of a \textit{different} example: the two components point as far apart as possible. We look for parallel/antiparallel weight vectors as evidence for conceptual representations of our SD task.
\subsection{Rich regime} \label{sec:rich_regime}
The rich learning regime is characterized by substantial weight changes throughout the course of training. For the MLP given in Eq \eqref{eq:model}, larger values of $\gamma$ lead to rich learning behavior. We allow $\gamma$ to vary between 0 and 1. The range $\gamma > 1$ is considered in Figure \ref{fig:rich}, where we see that no qualitative changes to our results occur for larger values of $\gamma$.

To study the rich regime, we take two approaches. First, recent theoretical work \citep{morwani_max_margin,wei_ma_max_margin,chizat_bach_max_margin} suggest that MLPs trained in a rich learning regime on a classification task discover a max margin solution: the weights maximize the distance between training points of different classes. We derive the max margin weights for an MLP with quadratic activations in Theorem \ref{th:max_margin}, finding that the max margin solution consists of parallel/antiparallel weight vectors, just as required from our hand-crafted solution. We defer the proof of this theorem to Appendix \ref{sec:rich_regime_details}.
\begin{theorem}
\label{th:max_margin}
    Let $\mathcal{D} = \lbrace \vr{x}_n, y_n \rbrace_{n=1}^P$ be a training set consisting of $P$ points sampled across $L$ training symbols, as specified in Section \ref{sec:setup}. Let $f$ be the MLP given by Eq \ref{eq:model}, with two changes:
    \begin{enumerate}
        \item Fix the readouts $a_i = \pm 1$, where exactly $m/2$ readouts are positive and the remaining are negative.
        \item Use quadratic activations $\phi(\cdot) = (\cdot)^2$.
    \end{enumerate}
    For weights $\vr{\theta} = \Bx{\vr{w}_i}_{i=1}^m$, define the max margin set $\Delta (\vr{\theta})$ to be
    \[\Delta (\vr{\theta}) = \argmax_{\vr{\theta}} \frac{1}{P} \sum_{n=1}^P \Hx{(2 y_n - 1) f(\vr{x}_n; \vr{\theta})} \,,\]
    subject to the norm constraints $\Nx{\vr{w}_i} = 1$. If $P, L \rightarrow \infty$, then for any $\vr{w}_i = (\vr{v}_i^1 ; \vr{v}_i^2) \in \Delta(\vr{\theta})$ and $\ell_i = \Nx{\vr{v}_i^1} \, \Nx{\vr{v}_i^2}$, we have that $\vr{v}_i^1 \cdot \vr{v}_i^2 / \ell_i = 1$ if $a_i = 1$ and $\vr{v}_i^1 \cdot \vr{v}_i^2 / \ell_i = -1$ if $a_i = -1$. Further, $\Nx{\vr{v}_i^1} = \Nx{\vr{v}_i^2}$.
\end{theorem}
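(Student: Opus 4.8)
The plan is to use the quadratic activation to collapse the max-margin objective into a simple bilinear form in the two halves of each hidden weight, and then optimize it block by block with Cauchy--Schwarz. Write $\phi(u)=u^2$, split each hidden weight as $\vr{w}_i=(\vr{v}_i^1;\vr{v}_i^2)$ and each input as $\vr{x}=(\vr{z}_1;\vr{z}_2)$, so that $\vr{w}_i\cdot\vr{x}=\vr{v}_i^1\cdot\vr{z}_1+\vr{v}_i^2\cdot\vr{z}_2$ and $f(\vr{x})=\tfrac{1}{\gamma\sqrt d}\sum_i a_i(\vr{w}_i\cdot\vr{x})^2$. The first step is to argue that as $P,L\to\infty$ the empirical objective $\tfrac1P\sum_n(2y_n-1)f(\vr{x}_n;\vr{\theta})$ converges, uniformly over the compact constraint set $\{\|\vr{w}_i\|=1\}_{i=1}^m$, to the deterministic functional $\mathcal{M}(\vr{\theta})=\tfrac12\E_{\text{same}}[f]-\tfrac12\E_{\text{diff}}[f]$, where for a \textit{same} example $\vr{z}_1=\vr{z}_2\sim\mathcal{N}(\vr{0},\vr{I}/d)$ and for a \textit{different} example $\vr{z}_1,\vr{z}_2$ are independent $\mathcal{N}(\vr{0},\vr{I}/d)$ (taking $\sigma^2=0$); uniform convergence then identifies $\Delta(\vr{\theta})$ with the argmax of $\mathcal{M}$.

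Second, I would evaluate the two expectations using $\E[z_jz_k]=\delta_{jk}/d$. For a \textit{same} example, $\vr{w}_i\cdot\vr{x}=(\vr{v}_i^1+\vr{v}_i^2)\cdot\vr{z}$, so $\E_{\text{same}}[(\vr{w}_i\cdot\vr{x})^2]=\tfrac1d\|\vr{v}_i^1+\vr{v}_i^2\|^2$. For a \textit{different} example the cross term vanishes by independence and mean zero, giving $\E_{\text{diff}}[(\vr{w}_i\cdot\vr{x})^2]=\tfrac1d(\|\vr{v}_i^1\|^2+\|\vr{v}_i^2\|^2)$. Subtracting, all squared-norm terms cancel and only $2\vr{v}_i^1\cdot\vr{v}_i^2/d$ survives, so $\mathcal{M}(\vr{\theta})=\tfrac{1}{\gamma d^{3/2}}\sum_i a_i\,\vr{v}_i^1\cdot\vr{v}_i^2$, with the positive prefactor irrelevant to the argmax.

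Third, since both the objective and the constraint $\|\vr{w}_i\|^2=\|\vr{v}_i^1\|^2+\|\vr{v}_i^2\|^2=1$ decouple across $i$, I would optimize each summand separately. For $a_i=1$: Cauchy--Schwarz gives $\vr{v}_i^1\cdot\vr{v}_i^2\le\|\vr{v}_i^1\|\,\|\vr{v}_i^2\|=\ell_i$ with equality iff $\vr{v}_i^1,\vr{v}_i^2$ are parallel with the same orientation, and AM--GM gives $\ell_i\le\tfrac12(\|\vr{v}_i^1\|^2+\|\vr{v}_i^2\|^2)=\tfrac12$ with equality iff $\|\vr{v}_i^1\|=\|\vr{v}_i^2\|$; both bounds are attainable simultaneously (e.g.\ $\vr{v}_i^1=\vr{v}_i^2$), so every maximizer satisfies $\vr{v}_i^1\cdot\vr{v}_i^2/\ell_i=1$ and $\|\vr{v}_i^1\|=\|\vr{v}_i^2\|$. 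For $a_i=-1$, applying the same argument to $-\vr{v}_i^1\cdot\vr{v}_i^2$ yields $\vr{v}_i^1\cdot\vr{v}_i^2/\ell_i=-1$ (antiparallel) with again $\|\vr{v}_i^1\|=\|\vr{v}_i^2\|$. Since $\ell_i=1/2>0$ at the optimum, these ratios are well-defined, and this is precisely the claimed characterization.

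I expect the main obstacle to be the first step rather than the algebra: justifying that the finite-sample objective converges to $\mathcal{M}$ uniformly enough over the weight sphere for the maximizing sets to converge, while handling the fact that \textit{different} examples form a U-statistic over unordered pairs of the $L$ sampled symbols rather than an i.i.d.\ average (a standard U-statistic law of large numbers, combined with $P\to\infty$ for the resampling of examples from those pairs). A secondary point to flag is that within each block the maximizer is pinned down only up to a common rotation/reflection of $(\vr{v}_i^1,\vr{v}_i^2)$, which is exactly why the rotation-invariant quantities $\vr{v}_i^1\cdot\vr{v}_i^2/\ell_i$ and $\|\vr{v}_i^1\|=\|\vr{v}_i^2\|$ are the right things to state.
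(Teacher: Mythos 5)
Your proposal is correct and reaches the paper's characterization, but the finishing step is genuinely different. Both arguments perform the same core reduction: average the margin separately over \textit{same} and \textit{different} examples, note that the difference of second moments kills the diagonal terms, and end up maximizing (for $a_i=1$) or minimizing (for $a_i=-1$) the bilinear coupling $\vr{v}_i^1\cdot\vr{v}_i^2$ subject to $\Nx{\vr{v}_i^1}^2+\Nx{\vr{v}_i^2}^2=1$, with the problem decoupling across hidden units. Where the paper assembles this into the matrix $\vr{X}=\frac{1}{P}\Hx{(\vr{X}^+)^\intercal\vr{X}^+-(\vr{X}^-)^\intercal\vr{X}^-}$, shows its $P,L\to\infty$ limit is circulant with identity off-diagonal blocks, and reads off parallel/antiparallel structure and equal half-norms from the Fourier eigendecomposition (even modes for the top eigenspace, odd for the bottom), you compute the population expectations per unit in closed form and pin down the maximizers with Cauchy--Schwarz plus AM--GM. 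Your route is more elementary and self-contained, and it delivers $\Nx{\vr{v}_i^1}=\Nx{\vr{v}_i^2}$ and the ratio $\pm 1$ in one stroke; the paper's spectral route buys an explicit description of the full degenerate maximizer subspaces (spans of even/odd Fourier modes) and connects directly to the empirical matrix one would diagonalize at finite $P,L$. Your handling of the $P,L\to\infty$ limit (uniform convergence over the weight sphere plus a U-statistic law of large numbers for the \textit{different} pairs, then convergence of argmax sets) is, if anything, more explicit than the paper's, which simply passes to the limiting matrix entrywise; so there is no gap, only a difference in how the limiting quadratic form is analyzed.
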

However, the max margin result does not use ReLU MLPs, relies on fixed readouts $a_i$, and says nothing about learning efficiency or insensitivity to spurious perceptual details, two additional properties we require from a conceptual solution. To address these shortcomings, we extend the analysis by proposing a heuristic construction that approximates a rich ReLU MLP as an ensemble of independent Markov processes (Section \ref{sec:heuristic}). Doing so enables a deeper characterization of rich learning dynamics, resulting in the following approximation of the test accuracy. Given an unseen test point $\vr{x}, y$, and prediction $\hat{y}$,
\begin{equation} \label{eq:rich_approx}
p(y = \hat{y}(\vr{x})) \approx \frac{1}{2} + \frac{1}{2} \Phi\Px{\sqrt{\frac{2(L^2 - L)}{13(\pi - 2)}}} \,,
\end{equation}
where $L$ is the number of training symbols and $\Phi$ is the CDF of a standard normal distribution.\footnote{This estimate is for $L \geq 3$. For $L = 2$, $p(y = \hat{y}) = 3/4$. See Section \ref{sec:rich_acc_approx} for details.} This estimate suggests that the model attains over 95 percent test accuracy with as few as $L = 5$ training symbols, and test accuracy does not change with different $d$.

We confirm our theoretical predictions with simulations in Figure \ref{fig:rich_lazy}. At the end of training, the hidden weights indeed become parallel and antiparallel, with negative coefficients gaining larger magnitude (Figure \ref{fig:rich_lazy}a). Figures \ref{fig:rich_lazy}b and c show that the rich model learns the same-different task with substantially fewer training symbols than lazier models, and exhibits excellent agreement with our theoretical test accuracy prediction. As predicted, the rich model's performance does not vary with input dimension (Figure \ref{fig:rich_lazy}b).

Altogether, the rich model develops conceptual representations, learns the same-different task given only a small number of training symbols, and exhibits clear insensitivity to input dimension. In this way, it exhibits conceptual behavior on the same-different task. 

\begin{figure*}[htb]
    \centering
    \includegraphics[width=\linewidth]{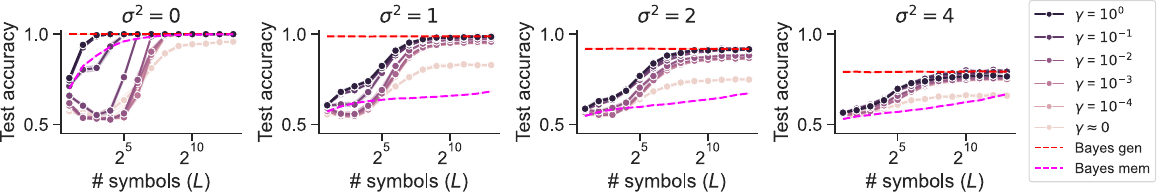}
    \vspace{-1em}
    \caption{\textbf{Bayesian simulations.} Test accuracy across different numbers of training symbols, for varying richness and noise ($d = 64$, $m = 1024$). Bayes optimal accuracy for both generalizing and memorizing priors are plotted with dashed lines. In all cases, rich models attain the Bayes optimal test accuracy under a generalizing prior after sufficiently many training symbols. Shaded error regions are computed across six runs and correspond to empirical 95 percent confidence intervals.}
    \label{fig:bayes}
\end{figure*}

\subsection{Lazy regime} \label{sec:lazy_regime}
The lazy learning regime is characterized by vanishingly small change in the model's hidden representations after training. Smaller values of $\gamma$ lead to lazy learning behavior. The limit $\gamma \rightarrow 0$ corresponds to the Neural Tangent Kernel (NTK) regime, where the network is well-described by a linearization around its initialization \citep{jacot_ntk}. In our numerics, we approximate this limit by using $\gamma = (1 \times 10^{-5}) / \sqrt{d}$.

Because a lazy neural network cannot adapt its representations to an arbitrary pattern, it is impossible for a lazy MLP to learn parallel/antiparallel weights. However, because the statistics of a \textit{same} example differ from that of a \textit{different} example, it may still be possible for a lazy MLP to succeed at the task given enough training data\footnote{For example, for a \textit{same} input $\vr{x} = (\vr{z};\vr{z})$ and a \textit{different} input $\vr{x}' = (\vr{z}_1, \vr{z}_2)$, the variance of $\vr{1} \cdot \vr{x}$ is twice that of $\vr{1} \cdot \vr{x}'$. Leveraging distinct statistics like this may still allow the lazy model to learn this task.}. Using standard kernel arguments \citep{cho_saul_kernel,jacot_ntk}, we bound the test error of a lazy MLP in Theorem \ref{th:lazy}. The proof is deferred to Appendix \ref{sec:lazy_regime_details}.
\begin{theorem}[informal]
\label{th:lazy}
    Let $f$ be an infinite-width ReLU MLP. If $f$ is trained on a dataset consisting of $P$ points constructed from $L$ symbols with input dimension $d$, then the test error of $f$ is upper bounded by $\mathcal{O} \Px{\exp \Bx{-L/d^2}}$.
\end{theorem}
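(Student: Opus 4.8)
The plan is to pass to the kernel limit and then reduce the task to recovering a single low-degree feature. First, at infinite width and $\gamma \to 0$ the training dynamics linearize around initialization, so the trained predictor coincides with a kernel machine for the ReLU neural tangent kernel \citep{jacot_ntk} --- in the long-training limit of the binary cross-entropy loss, the maximum-margin classifier in its RKHS --- whose closed form is given by the arc-cosine kernels of \citet{cho_saul_kernel}. Since every input has $\Nx{\vr x}^2 = \Nx{\vr s_1}^2 + \Nx{\vr s_2}^2 = 2 + \mathcal{O}(d^{-1/2})$ with $\chi^2$ tails, after conditioning on this event (which has probability $1 - e^{-\Omega(d)}$) the kernel is, up to a controllable remainder, a dot-product kernel on the sphere of radius $\sqrt{2}$ in $\R^{2d}$, which I expand in spherical harmonics: the RKHS splits orthogonally into degree-$k$ polynomial subspaces with positive weights $c_k$ decaying in $k$.

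The crux of the reduction is identifying which degree carries the label. A \emph{same} input $(\vr z; \vr z)$ and a \emph{different} input $(\vr z_1; \vr z_2)$ induce the same distribution on any function of a single half, and more generally their degree-$\le 1$ class-conditional moments coincide, so no linear feature classifies with nonvanishing margin. The degree-$2$ subspace does: it contains $g(\vr x) = \vr z_1 \cdot \vr z_2$, with $\E[g \mid \text{same}] = 1 + \mathcal{O}(d^{-1/2})$ and $\E[g \mid \text{different}] = 0$, and $g$ in fact separates the two classes with a hard margin $\Omega(1)$ that fails only on an event of probability $e^{-\Omega(d)}$, using the conditionally-Gaussian tail of $\vr z_1 \cdot \vr z_2$ and $\chi^2$ concentration of $\Nx{\vr z}^2$. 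Hence the task is realizable with a margin inside the RKHS; this is what will ultimately yield an exponential rather than a polynomial rate.

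Second, I would quantify how well the $L$ training symbols let the kernel machine recover the $g$-direction. Writing the predictor's output on a fresh test point as $\langle \hat{\vr c}, \psi(\vr x) \rangle$ in feature coordinates and decomposing it into the degree-$\le 1$ part, the $g$-component $\hat c_g \, g(\vr x)$, and a residual (the remaining degree-$2$ directions together with all higher degrees, which the decaying $c_k$ suppress like an implicit ridge), the goal is to show $(i)$ $\hat c_g$ concentrates around a positive constant and $(ii)$ the residual, evaluated at the independent test point, is a mean-controlled sub-exponential random variable whose scale relative to the $\Omega(1)$ margin of $g$ is $\Theta(d/\sqrt{L})$ --- reflecting that the degree-$2$ eigenspace has dimension $\Theta(d^2)$ that the $L$ training symbols must populate before its nuisance directions are damped. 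A misclassification then forces this residual past the margin, which by a sub-exponential tail bound has probability $e^{-\Omega(L/d^2)}$; adding the $e^{-\Omega(d)}$ probabilities that the norm- and margin-concentration events fail (dominated by the first term in the regime $L \lesssim d^3$ where the bound is informative) and integrating over the test point gives the claimed $\mathcal{O}(\exp[-L/d^2])$.

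The main obstacle is $(ii)$, and with it the precise exponent. The training inputs are far from generic points on the sphere: all \emph{same} inputs lie on the $d$-dimensional diagonal $\{(\vr z; \vr z)\}$, their contributions to the degree-$2$ design are rank-one matrices $\vr s_i \vr s_i^{\top}$, and the \emph{different} inputs populate a correlated slice of the feature space --- so no off-the-shelf random-design kernel-regression bound applies, and the anisotropy of the empirical feature covariance has to be handled directly. Concretely, the binding step is to prove the residual-to-margin variance is $\Theta(d^2/L)$ rather than, say, $\Theta(d/L)$ or $\Theta(1/L)$ (a naive plug-in estimate of $g$ would suggest the latter), since that pins the exponent; this needs a careful account of how the rank-one \emph{same}-features and the \emph{different}-features jointly span the degree-$2$ subspace and of how the minimum-RKHS-norm (equivalently, maximum-margin) solution allocates weight across it. A secondary point is making the implicit-bias statement --- that cross-entropy gradient flow converges to the RKHS max-margin classifier --- quantitative enough that the feature-space analysis applies to the actual trained network.
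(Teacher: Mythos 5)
Your plan targets a stronger statement than the paper actually proves, and the step you yourself flag as ``the binding step'' is exactly the missing proof. You propose to control the error of the \emph{trained} lazy predictor (the max-margin solution of the NTK RKHS under cross-entropy gradient flow), which requires (i) a quantitative implicit-bias statement and (ii) a proof that, under the strongly anisotropic design you correctly identify (rank-one \textit{same} features $\vr{s}_i\vr{s}_i^{\intercal}$, correlated \textit{different} features), the max-margin solution concentrates on the degree-2 direction $g(\vr{x})=\vr{z}_1\cdot\vr{z}_2$ with a residual of scale $\Theta(d/\sqrt{L})$ relative to the margin and sub-exponential tails. Neither is supplied; the exponent $L/d^2$ is conjectured from a dimension count of the degree-2 eigenspace rather than derived, and off-the-shelf kernel generalization bounds of the type you would invoke give polynomial rates in $d^2/L$, not the exponential tail the theorem asserts. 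As written, the proposal is a program with its central lemma open, so it does not establish the claim.

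The paper's proof avoids all of this by proving a much weaker (and explicitly informal) statement: it never analyzes the solution selected by training. It writes the infinite-width model in dual form $f(\vr{x})=\sum_j b_j K(\vr{x},\vr{x}_j)$ with the arc-cosine NTK and simply \emph{exhibits} a feasible restricted choice of dual coefficients --- $b_j=b^+$ on \textit{same} examples, $b_j=b^-<0$ on \textit{different} examples, with most coefficients zeroed so that the surviving examples use disjoint symbols and the kernel summands are independent (this is where $L\propto P$ enters). A second-order Taylor expansion of the kernel, $\E[K(u)]=\tfrac{1}{2\pi}+\tfrac{\E[u]}{2}+\tfrac{3\E[u^2]}{4\pi}+o(\E[u^2])$, together with the factor-of-two gap between $\E[(\vr{x}\cdot\vr{x}^+)^2]$ and $\E[(\vr{x}\cdot\vr{x}^-)^2]$ for a \textit{same} test point, gives $\E[f(\vr{x})]=\mathcal{O}(P/d)$ when $|b^-/b^+|<2$; Hoeffding's inequality over the independent summands then yields $\exp\Bx{-\mathcal{O}(P/d^2)}$ for \textit{same} test points and $\exp\Bx{-\mathcal{O}(P)}$ for \textit{different} ones, so the $L/d^2$ exponent comes from a mean of order $P/d$ competing with fluctuations of order $\sqrt{P}$, not from populating a $\Theta(d^2)$-dimensional eigenspace. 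If you want to salvage your route, you would either need to prove your step (ii) for the max-margin NTK solution --- a genuinely harder result than the paper's --- or retreat, as the paper does, to bounding the error achievable within an explicitly constructed family of dual solutions.
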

This bound suggests that to maintain a consistently low test error (or equivalently, high test accuracy), the number of training symbols $L$ needs to scale quadratically (at worst) with the input dimension: $L \propto d^2$.

We support our theoretical predictions with simulations in Figure \ref{fig:rich_lazy}. Because the model is in a lazy regime, the hidden weights do not move far from initialization, and no clear parallel/antiparallel structure emerges (Figure \ref{fig:rich_lazy}d). Figure \ref{fig:rich_lazy}c shows how models require increasingly more training data as richness decreases. Lazier models are also substantially more impacted by changes in input dimension (Figure \ref{fig:rich_lazy}e and f), and the scaling of training symbols with input dimension is consistent with our theory (Figure \ref{fig:rich_lazy}e). 

Altogether, the lazy model is unable to learn conceptual representations, instead relying on statistical associations that require a large amount of training data to learn and exhibit strong sensitivity to input dimension. In this way, the lazy model exhibits perceptual behavior on the same-different task.

\subsection{Same-different with noise} \label{sec:bayes}
Up until now, we defined equality by exact identity: even a minuscule deviation in a single coordinate is enough to break equality and classify an example as \textit{different}. Reality is far less clean, and real-world objects are rarely equal up to exact identity. As a first step towards this broader setting, we relax our dependence on exact identity and consider  a noisy SD task. In the notation of our setup (Section \ref{sec:setup}), we allow $\sigma^2 > 0$.

To understand optimal performance under noise, we apply the following Bayesian framework. As a baseline, we consider a prior corresponding to an idealized model which \textit{memorizes} the training symbols. This memorizing prior assumes every input symbol is distributed uniformly among the training symbols. To contrast this baseline, we consider a gold-standard prior corresponding to a model which \textit{generalizes} to novel symbols. This generalizing prior assumes every input symbol follows the true underlying distribution. By comparing the test accuracy of the trained models to the posteriors computed in these two settings, we identify which prior more closely reflects the models' operation. The calculation of these posteriors are recorded in Appendix \ref{sec:bayesian_derivation}.

Results are plotted in Figure \ref{fig:bayes}. In all cases, we find that the rich model approaches Bayes optimal under the generalizing prior. Lazier models tend to plateau at lower test accuracies; they nonetheless tend to exceed the performance of the memorizing prior at higher noise, indicating some level of generalization. Overall, learning richness appears to support convincing generalization to novel training symbols in the noisy SD task.

\begin{figure*}[h]
    \centering
    \includegraphics[width=\linewidth]{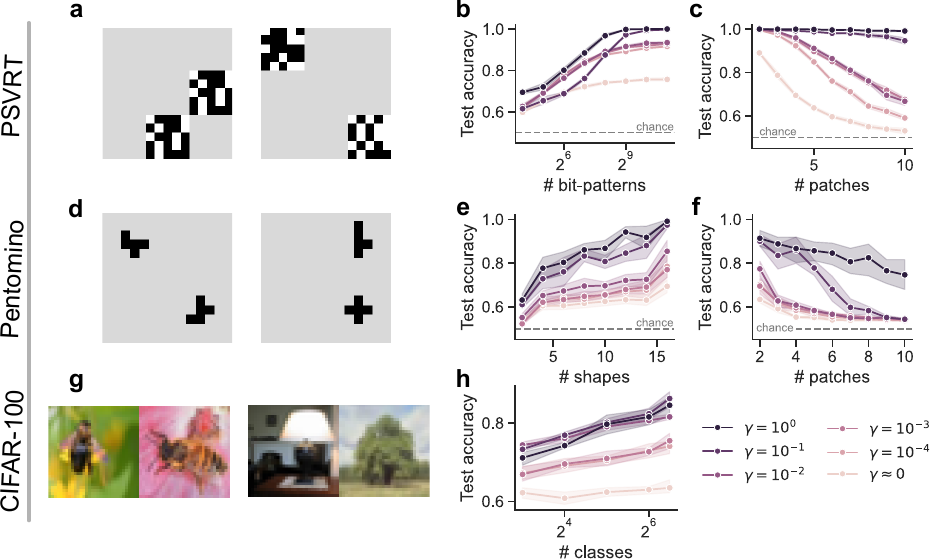}
    \caption{\textbf{Visual same-different results.} \textbf{(a)} PSVRT examples for \textit{same} (left) and \textit{different} (right). \textbf{(b,c)} Test accuracy on PSVRT across different numbers of training bit-patterns and image widths. Richer models learn the task with fewer patterns and exhibit less sensitivity to larger sizes. \textbf{(d)} Pentomino examples for \textit{same} (left) and \textit{different} (right). \textbf{(e,f)} Test accuracy on Pentomino across training shapes and image widths. As before, richer models learn the task with fewer training shapes and exhibit less sensitivity to larger sizes, though performance across models tends to diminish somewhat with increasing image size. \textbf{(g)} CIFAR-100 examples for \textit{same} (left) and \textit{different} (right). \textbf{(h)} Test accuracy on CIFAR-100 same-different across training classes. Richer models tend to perform better with fewer classes, though the richest model in this example performs worse. For this task, very rich models may overfit, necessitating an optimal richness level. \textbf{(all)} Shaded error regions are computed across six runs and correspond to empirical 95 percent confidence intervals.
    }
    \label{fig:vision}
\end{figure*}

\section{Validation in vision tasks} \label{sec:experiments}
To validate our theoretical findings in a more complex, naturalistic setting, we turn to visual same-different tasks.  Specifically, we examine three datasets designed originally to study visual reasoning and computer vision: 1) PSVRT \citep{Kim2018-not_so_clevr}, 2) Pentomino \citep{gulccehre2016pentomino}, and 3) CIFAR-100 \citep{krizhevsky2009cifar100}. These tasks offer significantly more challenge over the simple SD task we examine before. Rather than reason over symbol embeddings, a model must now reason over complex visual objects. Inputs are now images, and equality is no longer exact identity: inputs can be equal up to translation (in PSVRT), rotation (in Pentomino), or merely share a class label (CIFAR-100). All additional details on model and task configurations are enumerated in Appendix \ref{sec:model_task_details}.

We continue to use the same MLP model as before. Images are flattened before input to the model. Though better performance may be attained using CNNs or Vision Transformers, our ultimate goal is to study learning richness rather than maximize performance.\footnote{Nonetheless, as we will soon see, an MLP performs astonishingly well on these tasks despite its simplicity --- provided it remains in a rich learning regime.} To validate our theoretical findings, we should continue to see the three hallmarks of a conceptual solution (conceptual representations, efficiency, and insensitivity to spurious perceptual details), but only in rich MLPs.

\subsection{PSVRT}
The parameterized-SVRT (PSVRT) dataset is a version of the Synthetic Visual Reasoning Test (SVRT), a collection of challenging visual reasoning tasks based on abstract shapes \citep{Kim2018-not_so_clevr}. PSVRT replaces the original shapes with random bit-patterns in order to better control image variability. The task input consists of an image that has two blocks of bit-patterns, placed randomly on a blank background. The model must determine whether the blocks contain the same bit pattern, or different patterns. The training set consists of a fixed number of predetermined bit patterns. The test set consists of novel bit-patterns never encountered during training. 

Bit-patterns are \textit{patch-aligned}: they occur in non-overlapping locations that tile the image. The width of an image may be specified by the number of patches. Figure \ref{fig:vision}a illustrates examples from PSVRT that are three patches wide.

\textbf{Results.} Figure \ref{fig:vision}b plots a model's test accuracy on PSVRT as a function of the number of training patterns. As our theory suggests, richer models learn the task more easily and generalize after substantially fewer training patterns. To test our models under perceptual variation, we consider larger image sizes. We keep the same size of bit-patterns, but increase the number of patches to make a bigger input. Figure \ref{fig:vision}c indicates that a rich model continues to perform perfectly irrespective of image size, whereas lazier models exhibit a performance decay with larger inputs. 

Finally, we identify parallel/antiparallel analogs for PSVRT in the weights of a rich model (Figure \ref{fig:vision_concept}a). The presence of these conceptual representations suggests that our theory remains a reasonable description for how a rich MLP may learn a conceptual solution to the PSVRT same-different task.

\subsection{Pentomino}
The Pentomino task uses inputs that are \textit{pentomino polygons}: shapes consisting of five squares glued by edge \citep{gulccehre2016pentomino}. The input consists of an image with two pentominoes, placed arbitrarily on a blank background. The pentominoes may either be the same shape, or different. In contrast with the PSVRT task, sameness in this task implies equality up to rotation. After training on a fixed set of pentomino shapes, the model must generalize to entirely novel shapes. Like with PSVRT, shapes are patch-aligned. Figure \ref{fig:vision}d illustrates example inputs from this task that are three patches wide.

\textbf{Results.} Figure \ref{fig:vision}e plots a model's test accuracy on Pentomino as a function of the number of training shapes. Consistent with our theory, richer models learn the task more easily and generalize after substantially fewer training shapes. To test our models under perceptual variation, we consider larger image sizes. Like with PSVRT, we add additional patches to enlarge the input. Figure \ref{fig:vision}f indicates that a rich model continues to perform well on larger image sizes, though its performance does start to decay somewhat. Performance decays substantially faster for lazier models.  

We again identify parallel/antiparallel analogs for Pentomino in the weights of a rich model (Figure \ref{fig:vision_concept}c). The presence of these conceptual representations continues to support our theoretical perspective. Notably, \citet{gulccehre2016pentomino} introduced this task to motivate curriculum learning, finding that their MLP fails to perform above chance. We found that curriculum learning is unnecessary in the presence of sufficient richness.

\subsection{CIFAR-100}
The CIFAR-100 dataset consists of 60 thousand real-world images, each 32 by 32 pixels \citep{krizhevsky2009cifar100}. Images belong to one of 100 different classes. In this task, the input consists of two different unlabeled images that belong either to the same or different classes. After training on images from a fixed set of labels, the model must generalize to entirely novel labels. The sets of train and test labels are disjoint, making this an extremely challenging task. The labels themselves are not provided in any form  during training. Example inputs are illustrated in Figure \ref{fig:vision}g. We also experiment with providing features from VGG-16 pretrained on ImageNet \citep{simonyan2014vgg}. We pass CIFAR-100 images to VGG-16, then use intermediate features as inputs to our MLP. The weights of VGG-16 are fixed throughout the whole process. Note also that ImageNet is disjoint from CIFAR-100, so there is limited possibility of contamination in the test images.

\textbf{Results.} 
Figure \ref{fig:vision}h plots a model's test accuracy on CIFAR-100 images as a function of the number of training classes. We use outputs from VGG-16 block 4, layer 3, which performed the best with our model. As before, richer models tend to perform better with fewer training classes, but with a curious exception: in contrast to the previous two tasks, the richest model does not always perform decisively the best. This is particularly evident using the activations from other intermediate VGG layers, plotted in Figure \ref{fig:cifar100_layer}. For certain layers and number of training classes, the optimal $\gamma$ appears to be somewhat less than 1. This outcome may be in part an artifact of overfitting. Given the complexity of the task and the limited data, richer models are plausibly more susceptible to idiosyncratic features of the training set that generalize poorly, analogous to overfitting effects in classical statistics that degrade the performance of powerful models. In this case, slightly less learning richness may be the optimal setting. Since CIFAR-100 images are fixed to 32 by 32 pixels, we skipped testing variable image size for this task.

As before, we identify parallel/antiparallel analogs for this task in the weights of a rich model (Figure \ref{fig:vision_concept}e). The general benefit of richness together with the presence of conceptual representations continues to align with our theoretical perspective. Across our three visual same-different tasks, we identified generally consistent relationships between learning richness, conceptual solutions, and good performance, supporting our theoretical findings.

\section{Discussion} \label{sec:discussion}
We studied equality reasoning using a simple same-different task. We showed that learning richness drives the development of either conceptual or perceptual behavior. Rich MLPs develop conceptual representations, learn from few training examples, and remain largely insensitive to perceptual variation. Meanwhile, lazy MLPs require exhaustive training examples and deteriorate substantially with spurious perceptual changes.

Varying learning richness recapitulates \citet{Wasserman2017-perceptual_to_conceptual}'s continuum between perceptual and conceptual behavior on same-different tasks. Perhaps a pigeon's competency at equality reasoning may be broadly comparable to a lazy MLP's, requiring a great deal of training and exhibiting persistent sensitivity to spurious details. Perhaps equality reasoning in human or even language-trained great apes may be comparable to a rich MLP, where learning is faster, less sensitive to spurious details, and presumably involves conceptual abstractions. We suggest that a key parameter underlying these behavioral differences may be learning richness.

Learning richness is a concept imported from machine learning theory, and it is not altogether clear how to measure richness in a living brain. Since richness specifies the degree of change in a neural network's hidden representations, the most direct analogy in the brain is to look for adaptive representations that seem to encode task-specific variables. Such approaches have implicated richness as an essential property for context-dependent decision making, multitask cognition, generalizing knowledge, among many other phenomena \citep{Flesch2022-orthogonal,Ito2023-multitask,johnston2023abstract_rep,Farrell2023-lazy_rich}. Our theory predicts that greater learning richness relates to faster generalization in equality reasoning, and look forward to possible experimental validation of this principle.

Our work also contributes to the longstanding debate on a neural network's facility with abstract reasoning. Rich MLPs demonstrate successful generalization to unseen symbols irrespective of input dimension or even high noise variance. Further, the rich MLP's development of parallel/antiparallel components suggests the formation of abstractions, supporting the account that neural networks may indeed learn to develop and manipulate symbolic representations.

Practically, we demonstrate that learning richness is a vital hyperparameter. Increasing richness generally increases test performance substantially, improves data efficiency, and reduces sensitivity to spurious details. For complex tasks tuned with a large range of $\gamma$, there may be an optimal level of richness. Indeed, for CIFAR-100, we observed that more richness is not always better, and an optimal level exists. We encourage more widespread application of richness parametrizations like $\mu$P, and advocate for adding $\gamma$ to the list of tunable hyperparameters that every practitioner must consider when developing neural networks \citep{atanasov2024ultrarich}.

\paragraph{Acknowledgments.} Special thanks to Alex Atanasov for a serendipitous conversation that inspired much of this project. We also thank Hamza Chaudhry, Ben Ruben, Sab Sainathan, Jacob Zavatone-Veth, and members of the Pehlevan Group for many helpful comments and discussions on our manuscript. WLT is supported by a Kempner Graduate Fellowship. CP is supported by NSF grant DMS-2134157, NSF CAREER Award IIS-2239780, DARPA grant DIAL-FP-038, a Sloan Research Fellowship, and The William F. Milton Fund from Harvard University. This work has been made possible in part by a gift from the Chan Zuckerberg Initiative Foundation to establish the Kempner Institute for the Study of Natural and Artificial Intelligence. The computations in this paper were run on the FASRC cluster supported by the FAS Division of Science Research Computing Group at Harvard University.

\newpage

\bibliographystyle{ccn_style}
\bibliography{ref}

\clearpage
\appendix

\renewcommand{\thefigure}{A\arabic{figure}}
\renewcommand{\theHfigure}{A\arabic{figure}}
\renewcommand{\theequation}{A.\arabic{equation}}
\setcounter{figure}{0}
\setcounter{equation}{0}
\setcounter{theorem}{0}

\section*{\huge{Appendix}}
\vspace{2em}

\section{Conceptual representations in visual same-different}

\begin{figure*}
    \centering
    \includegraphics[width=\linewidth]{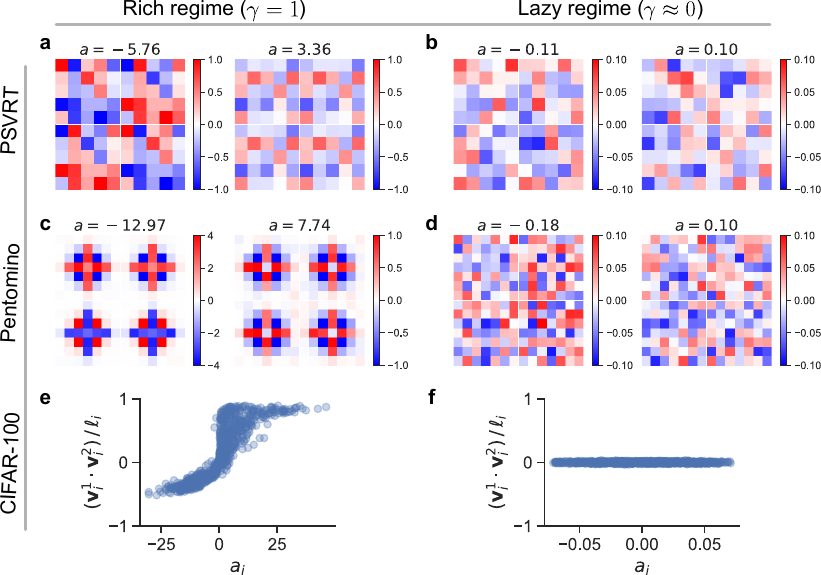}
    \caption{\textbf{Conceptual representations in visual same-different tasks.} \textbf{(a)} Visualization of representative hidden weights associated with maximal positive and negative readout weights, for a rich model training on PSVRT. Parallel/antiparallel structures are visible in how regions are either the same or precisely the opposite of their neighbors. \textbf{(b)} Visualization of representative hidden weights associated with maximal positive and negative readouts, for a lazy model training on PSVRT. There is no discernible structure, and the magnitudes of both readouts and hidden weights are small. \textbf{(c)} The same as (a), but for a rich model on the Pentomino task. Parallel/antiparallel structures are likewise visible. \textbf{(d)} The same as (b), but for a lazy model on the Pentomino task. There is no discernible structure, and the magnitudes of both readouts and hidden weights are small. \textbf{(e)} For a rich model trained on the CIFAR-100 task, we plot the alignment between weight components corresponding to the two images. There is a distinct parallel/antiparallel-like structure visible in the weight alignment, as we saw for MLPs trained on our simple SD task. \textbf{(f)} For a lazy model trained on the CIFAR-100 task, we plot the alignment between weight components corresponding to the two images. There are no discernible parallel/antiparallel structures at all.}
    \label{fig:vision_concept}
\end{figure*}
In Section \ref{sec:experiments}, we experimented with three different visual same-different tasks to validate our theoretical predictions in more complex settings. We found that richer models tend to learn the task with fewer training examples, and display some insensitivity to spurious details. The final signature of a conceptual solution is the presence of conceptual representations. In this appendix, we examine the hidden weights learned by rich and lazy models on these tasks, and present evidence for conceptual representations.

\paragraph{PSVRT}
Recall our MLP given in Eq \eqref{eq:model}. To interrogate the hidden weights $\vr{w}_i$ for conceptual representations, we reshape the weights to match the input shape and visualize them directly. The results are plotted in Figure \ref{fig:vision_concept}a and b. For ease of visualization, this task consists of images that are two patches wide.

The rich model learns an interesting analog of parallel/antiparallel weights. Recall for our MLPs trained on the simple same-different task, weight vectors associated with negative readouts tend to develop antiparalllel components. Weight vectors associated with positive readouts tend to develop parallel components.

We witness a similar development for PSVRT. For the example weights with a negative readout, adjacent patches are exactly the opposite, learning a negative weight where the neighboring patch has a positive weight. This structure mirrors the antiparallel weights learned in the simple same-different task. One difference is that for PSVRT, while two pairs of regions are precisely the opposite, the other two pairs are the same. While it is impossible to have every region become antiparallel to every other region, it is not obvious why two pairs should become parallel despite the negative readout weight.

Meanwhile, the example weights with a positive readout feature identical patches, matching weights exactly across the four regions of the input. These parallel regions are exactly what we would expect from our consideration of the simple same-different task.

In the lazy regime, the model learns no discernible structure. The magnitudes of both the readout and hidden weights are also significantly smaller. Altogether, the existence of parallel/antiparallel analogs for PSVRT strongly suggests that only the rich model has learned conceptual representations.

\paragraph{Pentomino} We perform the same analysis of the hidden weights $\vr{w}_i$ for the Pentomino task. The results are visualized in Figure \ref{fig:vision_concept}c and d. For ease of visualization, this task consists of images that are two patches wide.

As we saw for PSVRT, the rich model learns analogs of parallel/antiparallel weights. For the example weights corresponding to a negative readout, the top regions are precisely the opposite of the bottom regions, suggestive of the antiparallel weight components we characterized in the simple same-different task. For example weights corresponding to a positive readout, all four regions are the same, suggestive of parallel weight components.

These structure emerge only in the rich regime. For the lazy regime model, no discernible structure is learned. The overall magnitudes of the readouts and hidden weights are also much smaller. Altogether, the existence of parallel/antiparallel analogs for Penotmino strongly suggests only the rich model has learned conceptual representations.

\subsection{CIFAR-100}
For the CIFAR-100 task, we visualize the hidden weights in the same way as we did for the simple same-different task in Figure \ref{fig:rich_lazy}. We separate the weight vector $\vr{w}_i$ into two components, corresponding to the two flattened input images, and measure their alignment. The results are plotted in Figure \ref{fig:vision_concept}e and f.

For the rich case, alignment associated with negative readouts tends to be negative, and alignment associated with positive readouts tends to be positive, suggestive of the right parallel/antiparallel structure. The alignment is quite similar to what we saw for the rich model on the simple same-different task, though the antiparallel alignment is not as strong. The lazy model shows no apparent correlation at all between readouts and alignment. Altogether, the relationship between readouts and alignment witnessed only in the rich model strongly suggests only the rich model has learned conceptual representations.

\renewcommand{\thefigure}{B\arabic{figure}}
\renewcommand{\theHfigure}{B\arabic{figure}}
\renewcommand{\theequation}{B.\arabic{equation}}

\setcounter{figure}{0}
\setcounter{equation}{0}

\section{Hand-crafted solution details} \label{sec:hand_craft_details}
We outline in full detail how our hand-crafted solution solves the same-different task. Recall that the hand-crafted solution is given by the following weight configuration:
\begin{align*}
    \vr{w}_1^+ &= (\vr{1}; \vr{1}) \,, \\
    \vr{w}_2^+ &= (-\vr{1}; -\vr{1}) \,, \\
    \vr{w}_1^- &= (\vr{1}; -\vr{1}) \,, \\
    \vr{w}_2^- &= (-\vr{1}; \vr{1}) \,, \\
    a^+ &= 1 \,, \\
    a^- &= \rho \,,
\end{align*}
for $\vr{1} = (1, 1, \ldots, 1) \in \R^d$ and some $\rho > 0$. The MLP is given by
\begin{align*}
f(\vr{x}) =\,\, &a^+ \big(\phi(\vr{w}_1^+ \cdot \vr{x}) + \phi(\vr{w}_2^+ \cdot \vr{x})\big) \nonumber \\
- &\,a^- \big( \phi(\vr{w}_1^- \cdot \vr{x}) + \phi(\vr{w}_2^- \cdot \vr{x})\big)\,.
\end{align*}
Upon receiving a \textit{same} example $\vr{x}^+ = (\vr{z}, \vr{z})$, our model returns
\begin{align*}
    f(\vr{x}^+) &= \phi(\vr{1} \cdot \vr{z} + \vr{1} \cdot \vr{z}) + \phi(-\vr{1} \cdot \vr{z} - \vr{1} \cdot \vr{z}) \\
    &\,\,-\, \rho(\phi(\vr{1} \cdot \vr{z} - \vr{1} \cdot \vr{z}) + \phi(-\vr{1} \cdot \vr{z} + \vr{1} \cdot \vr{z})) \\
    &= |1 \cdot \vr{z} + 1 \cdot \vr{z}| \\
    &= 2 | \vr{1} \cdot \vr{z}| \,,
\end{align*}
which is certainly a positive quantity. Hence, the model classifies all positive examples correctly.

Upon receiving a \textit{different} example $\vr{x}^- = (\vr{z}, \vr{z}')$, our model returns
\begin{align*}
    f(\vr{x}^-) &= \phi(\vr{1} \cdot \vr{z} + \vr{1} \cdot \vr{z}') + \phi(-\vr{1} \cdot \vr{z} - \vr{1} \cdot \vr{z}') \\
    &\,\,-\, \rho(\phi(\vr{1} \cdot \vr{z} - \vr{1} \cdot \vr{z}') + \phi(-\vr{1} \cdot \vr{z} + \vr{1} \cdot \vr{z}')) \\
\end{align*}
Since training symbols are sampled as $\vr{z} \sim \mathcal{N}(0, \vr{I}/d)$, we have that $\vr{z} \overset{d}{=} -\vr{z}$. Furthermore, a sum of independent Gaussians remains Gaussian, so $\vr{1} \cdot \vr{z} \pm 1 \cdot \vr{z}' \sim \mathcal{N} (0, 2)$.
Hence, $f(\vr{x}^-) \overset{d}{=} u - \rho v$, where $u, v \sim \text{HalfNormal}(0, 2)$. Note, the ReLU nonlinearity ensures these quantities are distributed along a Half-Normal distribution, rather than a Gaussian. Further, $u$ and $v$ are independent since $\vr{1} \cdot \vr{z} + \vr{1} \cdot \vr{z}'$ is independent from $\vr{1} \cdot \vr{z} - \vr{1} \cdot \vr{z}'$ (the two sums are jointly Gaussian with zero covariance).

The test accuracy of the model on $\vr{x}^-$ is given by $p(f(\vr{x}^-) < 0)$, which can be expressed as an integral over the joint PDF of $u,v$:
\begin{align*}
    p(f(\vr{x}^-) < 0) &= p(u - \rho v < 0) \\
    &= \frac{1}{2\pi} \int_0^\infty \int_{u/\rho}^\infty \exp \Bx{-\frac{u^2 + v^2}{8}} \, dv\,du\,. \\
\end{align*}
To compute this quantity, we convert to polar coordinates. Let $u = r \cos(\theta)$ and $v = r \sin (\theta)$. Under this change of variables, we have
\begin{align*}
    p(f(\vr{x}^-) < 0) &= \frac{1}{2 \pi} \int_{\tan^{-1}(1 / \rho)}^{\pi/2} \int_{0}^\infty r \exp e^{-r^2/8} \,dr\,d\theta \\
    &= \frac{2}{\pi} \Px{\frac{\pi}{2} - \tan^{-1} (1/\rho)} \\
    &= \frac{2}{\pi} \tan^{-1} (\rho) \,.
\end{align*}
For $\rho \rightarrow \infty$, this quantity approaches $1$. Since $\rho$ cancels in the result for the \textit{same} input, $\rho$ can be arbitrarily large without impacting the classification accuracy on \textit{same} inputs. Hence, the hand-crafted solution overall solves the same-different task provided the relative magnitude of the negative readouts is large.

A technical detail required for both successful positive and negative classifications is that the test example is not precisely orthogonal to the parallel/antiparallel vectors, in which case the relevant dot products would be zero. However, a test example is exactly orthogonal to the weight vectors with probability zero, so this eventuality does not impact the solution's overall test accuracy.

Figure \ref{fig:illustrate} illustrates how parallel/antiparallel weight vectors may correctly classify a \textit{same} or \textit{different} example.

\begin{figure}[h]
    \centering
    \includegraphics[width=0.8\linewidth]{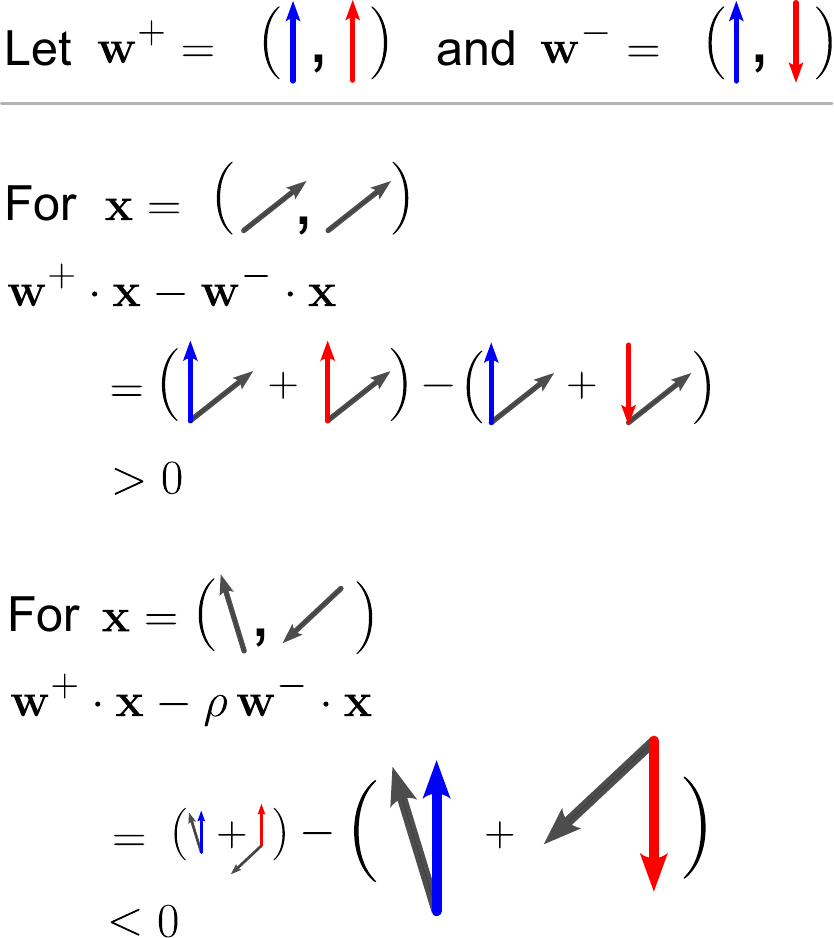}
    \caption{\textbf{Illustration of hand-crafted solution.} Parallel/antiparallel weight vectors $\vr{w}^+, \vr{w}^-$ are represented pictorially as sets of two vectors. The dot product operation is represented by conjoining the corresponding vectors: the dot product equals the cosine angle scaled by the magnitudes of the component vectors. For a \textit{same} test example, the $\vr{w}^+ \cdot \vr{x}$ remains positive while $\vr{w}^- \cdot \vr{x}$ cancels to zero. For a \textit{different} test example, the relative magnitude $\rho$ enables a successful negative classification.}
    \label{fig:illustrate}
\end{figure}

\nolinenumbers

\renewcommand{\thefigure}{C\arabic{figure}}
\renewcommand{\theHfigure}{C\arabic{figure}}
\renewcommand{\theequation}{C.\arabic{equation}}

\setcounter{figure}{0}
\setcounter{equation}{0}

\section{Rich regime details} \label{sec:rich_regime_details}
We conduct our analysis of the rich regime in two parts. We begin with a derivation of the max margin solution to our same-different task in Section \ref{sec:max_margin_solution}. Doing so requires us to replace our model's ReLU activations with quadratic activations. The max margin solution also does not demonstrate the rich model's learning efficiency or insensitivity to perceptual details. To address these shortcomings, we extend our analysis by considering a heuristic construction in which we approximate a rich MLP using an ensemble of independent Markov processes (Section \ref{sec:heuristic}). Using this construction, we derive a finer-grain characterization of the MLP's weight structure, and apply it to estimate the model's test accuracy for varying $L$ and $d$.

\subsection{Max margin solution} \label{sec:max_margin_solution}
An MLP trained on a classification objective often learns a max margin solution over the dataset \citep{morwani_max_margin,chizat_bach_max_margin,wei_ma_max_margin}. While this outcome is not guaranteed in our setting, studying the structure of the max margin solution nonetheless reveals critical details about how our MLP may be solving the same-different task. Following \citet{morwani_max_margin}, we adopt two conditions to expedite our analysis:

\begin{enumerate}
    \item We replace a strict max margin objective with a max \textit{average} margin objective over a dataset $\mathcal{D} = \Bx{\vr{x}_n, y_n}_{n=1}^P$
    \[\max_{\vr{\theta}} \, \frac{1}{P} \sum_{n=1}^P \Hx{(2y_n - 1) f(\vr{x}_n; \vr{\theta})} \,,\]
    where $\vr{x}_n, y_n$ are sampled over a training distribution with $L$ symbols and the objective is subject to some norm constraint on $\vr{\theta}$. Given the symmetry of the task, a max average margin objective forms a reasonable proxy to the strict max margin.
    \item We consider quadratic activations $\phi(\cdot) = (\cdot)^2$ rather than ReLU. Doing so alters our model from Eq \eqref{eq:model}, but we later use a heuristic construction to argue that the resulting solution is recovered under ReLU activations in a rich learning regime.
\end{enumerate}
We further allow $P, L \rightarrow \infty$. Following these simplifications, we derive the max average margin solution.

\begin{theorem}
    Let $\mathcal{D} = \lbrace \vr{x}_n, y_n \rbrace_{n=1}^P$ be a training set consisting of $P$ points sampled across $L$ training symbols, as specified in Section \ref{sec:setup}. Let $f$ be the MLP given by Eq \ref{eq:model}, with two changes:
    \begin{enumerate}
        \item Fix the readouts $a_i = \pm 1$, where exactly $m/2$ readouts are positive and the remaining are negative.
        \item Use quadratic activations $\phi(\cdot) = (\cdot)^2$.
    \end{enumerate}
    For weights $\vr{\theta} = \Bx{\vr{w}_i}_{i=1}^m$, define the max margin set $\Delta (\vr{\theta})$ to be
    \[\Delta (\vr{\theta}) = \argmax_{\vr{\theta}} \frac{1}{P} \sum_{n=1}^P \Hx{(2 y_n - 1) f(\vr{x}_n; \vr{\theta})} \,,\]
    subject to the norm constraints $\Nx{\vr{w}_i} = 1$. If $P, L \rightarrow \infty$, then for any $\vr{w}_i = (\vr{v}_i^1 ; \vr{v}_i^2) \in \Delta(\vr{\theta})$ and $\ell_i = \Nx{\vr{v}_i^1} \, \Nx{\vr{v}_i^2}$, we have that $\vr{v}_i^1 \cdot \vr{v}_i^2 / \ell_i = 1$ if $a_i = 1$ and $\vr{v}_i^1 \cdot \vr{v}_i^2 / \ell_i = -1$ if $a_i = -1$. Further, $\Nx{\vr{v}_i^1} = \Nx{\vr{v}_i^2}$.
\end{theorem}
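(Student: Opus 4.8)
The plan is to evaluate the average-margin objective explicitly in the $P,L\to\infty$ limit, reduce it to a sum of decoupled per-neuron terms, and then solve each one with Cauchy--Schwarz and AM--GM. First I would write $\vr{w}_i=(\vr{v}_i^1;\vr{v}_i^2)$ and $\vr{x}_n=(\vr{z}_1^{(n)};\vr{z}_2^{(n)})$, so that with the quadratic activation $(\vr{w}_i\cdot\vr{x}_n)^2=(\vr{v}_i^1\cdot\vr{z}_1^{(n)})^2+2(\vr{v}_i^1\cdot\vr{z}_1^{(n)})(\vr{v}_i^2\cdot\vr{z}_2^{(n)})+(\vr{v}_i^2\cdot\vr{z}_2^{(n)})^2$. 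Splitting the balanced dataset average into its \emph{same} and \emph{different} halves, the average margin equals $\tfrac12\big(\E_{\mathrm{same}}[f]-\E_{\mathrm{diff}}[f]\big)$. On \emph{same} examples $\vr{z}_1^{(n)}=\vr{z}_2^{(n)}$ is a training symbol, contributing $(\vr{v}_i^1+\vr{v}_i^2)^\top\widehat{\Sigma}(\vr{v}_i^1+\vr{v}_i^2)$ per neuron, where $\widehat{\Sigma}=\tfrac1L\sum_{\ell=1}^{L}\vr{s}_\ell\vr{s}_\ell^\top$ is the empirical symbol covariance; on \emph{different} examples the two halves are distinct training symbols, so the averaged contribution is $(\vr{v}_i^1)^\top\widehat{\Sigma}\vr{v}_i^1+(\vr{v}_i^2)^\top\widehat{\Sigma}\vr{v}_i^2$ plus a cross term controlled by $\Nx{\tfrac1L\sum_\ell\vr{s}_\ell}$. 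As $L\to\infty$ the law of large numbers gives $\widehat{\Sigma}\to\tfrac1d\vr{I}$ and $\tfrac1L\sum_\ell\vr{s}_\ell\to\vr{0}$, while $P\to\infty$ replaces the dataset average by its expectation; since the objective is a fixed polynomial in $\vr{\theta}$ on the compact constraint set $\{\Nx{\vr{w}_i}=1\}$, this convergence is uniform and the maximizing sets converge. Using $\Nx{\vr{v}_i^1+\vr{v}_i^2}^2-\Nx{\vr{v}_i^1}^2-\Nx{\vr{v}_i^2}^2=2\,\vr{v}_i^1\cdot\vr{v}_i^2$, what survives is
\[
\frac{1}{P}\sum_{n=1}^{P}\Hx{(2y_n-1)\,f(\vr{x}_n;\vr{\theta})}\;\longrightarrow\;\frac{1}{\gamma d^{3/2}}\sum_{i=1}^{m}a_i\,(\vr{v}_i^1\cdot\vr{v}_i^2)\qquad(P,L\to\infty)\,.
\]

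Because this limiting objective is a sum of terms $a_i\,\vr{v}_i^1\cdot\vr{v}_i^2$ and the constraint $\Nx{\vr{w}_i}^2=\Nx{\vr{v}_i^1}^2+\Nx{\vr{v}_i^2}^2=1$ is separable across neurons, the maximization decouples (and the prescribed split of readouts into $m/2$ positive and $m/2$ negative is irrelevant to the per-neuron argument). For each $i$, Cauchy--Schwarz gives $|\vr{v}_i^1\cdot\vr{v}_i^2|\le\Nx{\vr{v}_i^1}\,\Nx{\vr{v}_i^2}=\ell_i$ and AM--GM gives $\ell_i\le\tfrac12(\Nx{\vr{v}_i^1}^2+\Nx{\vr{v}_i^2}^2)=\tfrac12$, so $a_i\,\vr{v}_i^1\cdot\vr{v}_i^2\le\tfrac12$ for every $i$ and the objective is at most $m/2$. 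This bound is attained (by the hand-crafted configuration $\vr{v}_i^1=\pm\vr{v}_i^2$ with $\Nx{\vr{v}_i^1}=1/\sqrt2$), so at any maximizer \emph{every} neuron's term must equal $\tfrac12$. Equality in AM--GM forces $\Nx{\vr{v}_i^1}=\Nx{\vr{v}_i^2}$, and equality in Cauchy--Schwarz forces $\vr{v}_i^1$ and $\vr{v}_i^2$ collinear with the sign fixed by $a_i$, i.e. $\vr{v}_i^1\cdot\vr{v}_i^2/\ell_i=1$ when $a_i=1$ and $\vr{v}_i^1\cdot\vr{v}_i^2/\ell_i=-1$ when $a_i=-1$; in particular both halves are nonzero (a zero half would make the term $0<\tfrac12$), so $\ell_i\neq0$ and the ratio is well defined.

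The main obstacle I expect is making the $P,L\to\infty$ limit fully rigorous: one has to (i) show $\widehat{\Sigma}$ and the empirical symbol mean concentrate to $\tfrac1d\vr{I}$ and $\vr{0}$, (ii) control the off-diagonal cross term generated by \emph{different} pairs, and (iii) argue that uniform convergence of the objective (a polynomial, hence equicontinuous, on the product of unit spheres) implies convergence of the $\argmax$ sets so that the limiting characterization transfers back. Once the limiting objective $\tfrac{1}{\gamma d^{3/2}}\sum_i a_i\,\vr{v}_i^1\cdot\vr{v}_i^2$ is established, the remaining optimization is the short Cauchy--Schwarz/AM--GM computation above, and the equal-norm and sign conditions fall out of the equality cases.
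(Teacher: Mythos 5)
Your proposal is correct, and it reaches the same limiting reduction as the paper but finishes it by a genuinely different (and more elementary) argument. The paper also splits the averaged margin into \emph{same} and \emph{different} halves, but it phrases the per-neuron objective as a Rayleigh quotient $\vr{w}_i^\top \vr{X}\, \vr{w}_i$ for the matrix $\vr{X} = \frac{1}{P}\Hx{(\vr{X}^+)^\intercal \vr{X}^+ - (\vr{X}^-)^\intercal \vr{X}^-}$, shows entrywise that $\vr{X}$ converges (as $P,L\to\infty$) to $\tfrac{1}{2d}$ times the block-swap matrix, and then characterizes maximizers via the Fourier eigendecomposition of this circulant matrix: positive-readout weights must lie in the even-mode (top) eigenspace and negative-readout weights in the odd-mode (bottom) eigenspace, after which a computation with the Fourier coefficients yields the parallel/antiparallel and equal-norm conditions. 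Your route gets to the identical limiting objective $\propto \sum_i a_i\, \vr{v}_i^1\cdot\vr{v}_i^2$ by expanding the quadratic activation directly and invoking concentration of the empirical symbol covariance and mean, and then replaces the entire eigen-analysis with the per-neuron Cauchy--Schwarz and AM--GM equality cases under the separable constraint $\Nx{\vr{v}_i^1}^2+\Nx{\vr{v}_i^2}^2=1$; this is shorter, avoids the circulant machinery, and makes explicit the points the paper leaves implicit (that every neuron must individually attain the per-neuron maximum, and that $\ell_i\neq 0$ so the ratio is well defined). The steps you flag as needing care --- vanishing of the cross term for \emph{different} pairs via the empirical mean, and transferring the characterization through uniform convergence of the objective to convergence of the argmax sets --- are handled at the same (heuristic) level of rigor in the paper itself, which simply works with the limiting matrix, so your treatment is not weaker than the published one on these points.
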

\begin{proof}
Let $\mathcal{D}^+$ be the subset of $\mathcal{D}$ consisting of \textit{same} examples and $\mathcal{D}^-$ be the subset of \textit{different} examples. Let $\mathcal{I}^+$ be the set of indices $i$ such that the readout weight $a_i > 0$. Let $\mathcal{I}^-$ be the set of indices $j$ such that $a_j < 0$. Then our max average margin solution becomes

\begin{widetext}
\begin{align*}
    \max_{\vr{\theta}}\, \frac{1}{P} \sum_{n=1}^P \Hx{(2y_n - 1) f (\vr{x}_n; \vr{\theta})} &= \max_{\vr{\theta}} \, \frac{1}{P} \Hx{\sum_{\vr{x}^+ \in \mathcal{D}^+} \Hx{f (\vr{x}^+; \vr{\theta})} - \sum_{\vr{x}^- \in \mathcal{D}^-} \Hx{f (\vr{x}^-; \vr{\theta})}} \\
    &= \max_{\vr{\theta}}\, \frac{1}{P} \Hx{\sum_{\mathcal{D}^+} \Hx{\sum_{i \in \mathcal{I}^+} |a_i| (\vr{w}_i \cdot \vr{x}^+)^2} - \sum_{\mathcal{D}^-} \Hx{\sum_{i \in \mathcal{I}^+} |a_i| (\vr{w}_i \cdot \vr{x}^-)^2}} \\
    &\qquad\qquad + \frac{1}{P} \Hx{\sum_{\mathcal{D}^-} \Hx{\sum_{j \in \mathcal{I}^-} |a_j| (\vr{w}_j \cdot \vr{x}^-)^2} - \sum_{\mathcal{D}^-} \Hx{\sum_{j \in \mathcal{I}^-} |a_j| (\vr{w}_j \cdot \vr{x}^+)^2}} \,.
\end{align*}
\end{widetext}

\linenumbers
Suppose we stack all \textit{same} training examples $\vr{x}^+$ into a large matrix $\vr{X}^+ \in \R^{|\mathcal{D}^+| \times 2d}$ and stack all \textit{different} training examples $\vr{x}^-$ into a large matrix $\vr{X}^- \in \R^{|\mathcal{D}^-| \times 2d}$. Applying the norm constraints $\Nx{\vr{w}_i} = 1$, our max margin solution is resolved by the following objectives
\begin{align*}
    \vr{w}_*^+ = &\argmax_{\vr{w}} \, \frac{1}{P} \Hx{\Nx{\vr{X}^+ \vr{w}}^2 - \Nx{\vr{X}^- \vr{w}}^2} \quad \text{such that} \, \Nx{\vr{w}} = 1 \,, \\
    \vr{w}_*^- = &\argmin_{\vr{w}} \, \frac{1}{P} \Hx{\Nx{\vr{X}^+ \vr{w}}^2 - \Nx{\vr{X}^- \vr{w}}^2} \quad \text{such that} \, \Nx{\vr{w}} = 1 \,,
\end{align*}
where $\vr{w}_*^+$ and $\vr{w}_*^-$ represent the hidden weights of the max average margin solution.

Maximizing (or minimizing) this objective is equivalent to finding the largest (or smallest) eigenvector of the matrix $\vr{X} = \frac{1}{P} \Hx{\Px{\vr{X}^+}^\intercal \vr{X}^+ - \Px{\vr{X}^-}^\intercal \vr{X}^-}$. In the limit $P, L \rightarrow \infty$, this matrix becomes circulant. Let us see how. Note that $\vr{X} \in \R^{2d \times 2d}$. Suppose there are exactly $P / 2$ \textit{same} examples and $P/2$ \textit{different} examples. Along the diagonal of $\vr{X}$ are terms
\[X_{ii} = \frac{1}{P} \sum_{j=1}^{P/2} \Hx{(x_{ij}^+)^2 - (x_{ij}^-)^2} \,, \]
where $x_{ij}^+$ corresponds to the $i$th index of the $j$th \textit{same} example, and $x_{ij}^-$ is the same for the $j$th \textit{different} example. Because $L \rightarrow \infty$, we have that $x_{i,j}^+ \sim \mathcal{N} (0, 1/d)$ and $x_{i,j}^- \sim \mathcal{N}(0, 1/d)$, where $x_{i,j}^+$ is independent of $x_{i,j}^-$. Hence, $X_{ii} \rightarrow 0$ as $P \rightarrow \infty$.

Now let us consider the diagonal of the first quadrant
\[X_{i,2i} = \frac{1}{P} \sum_{j=1}^{P/2} \Hx{x_{ij}^+ \, x_{2i,j}^+ - x_{ij}^- \, x_{2i,j}^-} \,. \]
For same examples, $x_{ij}^+ = x_{2i,j}^+$, so
\[\frac{1}{P} \sum_{j=1}^{P/2} x_{ij}^+ x_{2i,j}^+ = \frac{1}{P} \sum_{j=1}^{P/2} \Px{x_{ij}^+}^2 \rightarrow \frac{1}{2} \E \Hx{\Px{x_{ij}^+}^2} = \frac{1}{2d}\]
For different examples, $x_{ij}^-$ remains independent of $x_{2i,j}^-$, so 
\[\frac{1}{P} \sum_{j=1}^{P/2} x_{ij}^+ x_{2i,j}^+ \rightarrow 0 \,.\]
We therefore have overall that $X_{i,2i} \rightarrow 1/2d$. The same argument applies for the diagonal of the third quadrant, revealing that $X_{2i,i} \rightarrow 1/2d$.

For all other terms $X_{ik}$ where $i \neq k$, $i \neq k/2$, and $i/2 \neq k$, we must have that $x_{ij}$ is independent of $x_{kj}$ for both \textit{same} and \textit{different} examples, so $X_{ik} \rightarrow 0$.

Hence, $\vr{X}$ is circulant with nonzero values $1/2d$ only on the diagonals of the first and third quadrant. Figure \ref{fig:xx} plots an example $\vr{X}$. In the remainder of this section, we multiply $\vr{X}$ by a normalization factor $2d$. Doing so does not impact the max margin weights, but changes the value along the quadrant diagonals to 1.

\begin{figure}
    \centering
    \includegraphics[width=0.8\linewidth]{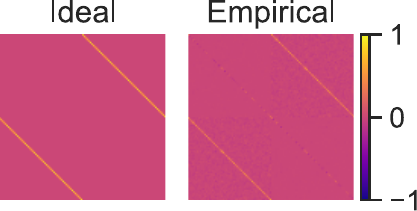}
    \caption{\textbf{Example ideal and empirical $\vr{X}$.} The matrix does indeed become circulant with nonzero values on the diagonal of the first and third quadrants. The empirical $\vr{X}$ is computed from a batch of 3000 examples sampled from a training set consisting of 64 symbols.}
    \label{fig:xx}
\end{figure}

The eigendecomposition of a circulant matrix is well studied, and can be given in terms of Fourier modes. In particular, the $\ell$th eigenvector $\vr{u}_\ell$ is given by
\[\vr{u}_\ell = \frac{1}{\sqrt{2d}} (1, r^\ell, r^{2\ell}, \ldots , r^{(2d - 1) \ell}) \,,\]
where
\[r = e^{\frac{\pi i}{d}} \,\]
and $\ell$ ranges from $0$ to $2d - 1$.
The corresponding eigenvalues $\lambda_\ell$ are
\[\lambda_\ell = r^{d \ell} = e^{\ell \pi i}\,.\]
This expression implies that $\lambda_\ell = 1$ for even $\ell$ and $\lambda_\ell = -1$ for odd $\ell$. Hence, $\vr{w}_*^+$ lies in the subspace spanned by $\vr{u}_{\ell}$ for even $\ell$, and $\vr{w}_*^-$ lies in the subspace spanned by eigenvectors with odd $\ell$.

To characterize this solution further, suppose we partition a weight vector $\vr{w} \in \R^{2d}$ into equal halves $\vr{w} = (\vr{v}^1; \vr{v}^2)$, where $\vr{v}^1 \in \R^d$. Considering the even case first, suppose $\vr{w} \in \vr{U}_2$ where $\vr{U}_2 = \text{span} \Bx{\vr{u}_{0}, \vr{u}_2, \ldots, \vr{u}_{2(d - 1)}}$. Then there exist coefficients $c_0, c_2, \ldots c_{2(d - 1)}$ such that
\[\vr{w} = \sum_{n=0}^{d-1} c_{2n} \vr{u}_{2n} \,.\]
Note that
\[r^{k\ell_1} \cdot \overline{r}^{k\ell_2 + d} = e^{\frac{k \pi i}{d} (\ell_1 - \ell_2)} \cdot e^{-\pi i \ell_2} \,.\]
If we partition our set of eigenvectors as $\vr{u}_\ell = (\vr{s}_\ell^1, \vr{s}_\ell^2)$, then
\[\vr{s}_{\ell_1}^1 \cdot \vr{s}_{\ell_2}^2 = e^{-\pi i \ell_2}\sum_{k=0}^{d-1} e^{\frac{k \pi i}{d} (\ell_1 - \ell_2)} \,.\]
This quantity is 0 when $\ell_1 \neq \ell_2$. Otherwise, it is $1$ if $\ell_1 = \ell_2$ are even and $-1$ if they are odd. Hence, for $(\vr{v}^1; \vr{v}^2) \in \vr{U}_2$, we have that
\[
    \vr{v}^1 \cdot \vr{v}^2 = \frac{1}{2d} \Px{c_0^2 + c_2^2 + \ldots + c_{2(d-1)}^2} \,.
\]
Observe also that
\[\Nx{\vr{v}^1} = \Nx{\vr{v}^2} = \frac{1}{\sqrt{2d}} \sqrt{c_0^2 + c_2^2 + \ldots + c_{2(d-1)}^2} \,,\]
so we must have
\[
\frac{\vr{v}^1 \cdot \vr{v}^2}{\Nx{\vr{v}^1} \, \Nx{\vr{v}^2}} = 1\,.
\]
In this way, we see that the components of $\vr{w}^+_*$ must be \textit{parallel} and share the same mangitude. We may repeat the same calculation for $\vr{w} \in \vr{U}_1$, where $\vr{U}_1 = \text{span} \Bx{\vr{u}_{1}, \vr{u}_3, \ldots, \vr{u}_{2d - 1}}$. Doing so reveals that
\[
\frac{\vr{v}^1 \cdot \vr{v}^2}{\Nx{\vr{v}^1} \, \Nx{\vr{v}^2}} = -1\,,
\]
so the components of $\vr{w}^-_*$ must be \textit{antiparallel}.
\end{proof}

\linenumbers
\subsection{Heuristic construction} \label{sec:heuristic}
By examining the max average margin solution, we witness the emergence of parallel/antiparallel weight vectors. In Section \ref{sec:normative_solution}, we discussed how parallel/antiparallel weights allow an MLP to solve the SD task. However, it remains unclear how to characterize the learning efficiency and insensitivity to spurious perceptual details in the resulting model, and whether these results apply at all to a ReLU MLP trained on a finite dataset. To begin answering these questions, we develop a heuristic construction that summarizes the learning dynamics of a ReLU MLP over the subsequent sections. We will demonstrate that

\begin{enumerate}
    \item The hidden weights $\vr{w}$ become parallel (or antiparallel) for correspondingly positive (or negative) readout weights $a$
    \item The magnitude of the readout weights are such that $|\overline{a^-}| > |\overline{a^+}|$, where $\overline{a^-}$ denotes the average across negative readout weights and $\overline{a^+}$ denotes the average across positive readout weights
\end{enumerate}
We then leverage our understanding of the weight structure to estimate a rich model's test accuracy on our SD task. The remainder of this appendix is dedicated to developing this heuristic approach.

We proceed using a Markov process approximation to the full learning dynamics in the noiseless setting ($\sigma^2 = 0$). Observe that the gradient updates to the readout and hidden weights take the following form. For a batch containing $N$ training examples,
\begin{align*}
    \Delta a_i &= -\frac{c}{N} \sum_{j=1}^N \frac{\d \mathcal{L}_j}{\d f}\, \phi (\vr{w}_i \cdot \vr{x}_j)\,, \\
    \Delta \vr{w}_i &= -\frac{c}{N} \sum_{j=1}^N \frac{\d \mathcal{L}_j}{\d f}\, a_i\, \phi' (\vr{w}_i \cdot \vr{x}_j)\, \vr{x}_j\,, 
\end{align*}
where $c = \frac{\alpha}{\gamma \sqrt{d}}$, $\alpha$ is the learning rate, and
\begin{align}
-\frac{\d \mathcal{L}_j}{\d f} &= -\frac{\d \mathcal{L}(y, f(\vr{x}))}{\d f} \Bigg|_{y_j, f(\vr{x}_j)} \nonumber \\
&= \frac{y_j}{1 + e^{f(\vr{x}_j)}} - \frac{1 - y_j}{1 + e^{-f(\vr{x}_j)}}  \label{eq:neg_grad_loss} \,.
\end{align}
Focusing on $\Delta \vr{w}_i$, we rewrite its gradient update as
\[ \Delta \vr{w}_i = \sum_{j=1}^N \xi_{ij} \vr{x}_j \,, \]
where
\[\xi_{ij} = -\frac{c}{N} \frac{\d \mathcal{L}_j}{\d f}\,a_i\,\phi' (\vr{w}_i \cdot \vr{x}_i) \,.\]
When written in this form, it becomes clear that the hidden weight gradient updates lie in the basis of the training examples $\vr{x}_j$. If the initialization of the hidden weights $\vr{w}_i$ is small, then $\vr{w}_i$ lies approximately in the basis of training examples also. Specifically, we require that
\begin{equation} \label{eq:small_init_condition}
\vr{w}_i(0) \neq \vr{0} \;\; \text{and} \;\; \frac{1}{\xi_{ij}} \, \vr{w}_i (0) \cdot \vr{x}_j \rightarrow 0 \quad \text{as}\;\; d \rightarrow \infty \,,
\end{equation}
where $\vr{w}_i (0)$ refers to $\vr{w}_i$ at initialization (generally, $\vr{w}_i (t)$ is the value of $\vr{w}_i$ after $t$ gradient steps). The requirement that $\vr{w}_i (0) \neq \vr{0}$ ensures that the initial gradient update is nonzero.

Suppose our training set consists of $L$ symbols $\vr{z}_1; \vr{z}_2, \ldots, \vr{z}_L$. If we partition $\vr{w}_i = (\vr{v}_i^1; \vr{v}_i^2)$, then after $t$ gradient steps and in the infinite limit $d \rightarrow \infty$
\begin{align}
    \vr{v}_i^1 (t) &= \omega_{i,1}^1 (t)\, \vr{z}_1 + \omega_{i,2}^1 (t)\, \vr{z}_2 + \ldots + \omega_{i,L}^1 (t)\, \vr{z}_L \,, \nonumber \\
    \vr{v}_i^2 (t) &= \omega_{i,1}^2 (t)\, \vr{z}_1 + \omega_{i,2}^2 (t)\, \vr{z}_2 + \ldots + \omega_{i,L}^2 (t)\, \vr{z}_L \,, \label{eq:w_basis}
\end{align}
where $\omega_{i,k}^p (t)$ corresponds to the overlap $\vr{w}_i^p (t) \cdot \vr{z}_k$.\footnote{The large number of indices on $\omega$ is unwieldy, so we will omit some or all of them where context allows.} Note, for these relations to hold, we require that $\vr{z}_i \cdot \vr{z}_j = \delta_{ij}$ as $d \rightarrow \infty$. In this way, we may consider the $\omega$'s to be the coordinates of $\vr{w}$ in the basis of the training symbols $\vr{z}$.

Note that $\omega$ is a function of the update coefficients $\xi_{ij}$. If $\vr{w}_i \cdot \vr{x}_j < 0$, then $\xi_{ij} = 0$. Otherwise, $\xi_{ij}$ depends on $\frac{\d \mathcal{L}_j}{\d f}$ and $a_i$, introducing many additional and complex couplings to other parameters in the model. Our ultimate goal is to understand the general structure of the hidden weights $\vr{w}_i$, rather than to obtain exact formulas, so we apply the following coarse approximation
\begin{equation} \label{eq:coeff_approx}
\xi_{ij} = \begin{cases}
    \text{sign}\Px{-\frac{\d \mathcal{L}_j}{\d f} \,a_i} & \vr{w}_i \cdot \vr{x}_j > 0 \,, \\
    0 & \vr{w}_i \cdot \vr{x}_j \leq 0 \,.
\end{cases}
\end{equation}
Such an approximation resembles sign-based gradient methods like signSGD \citep{signSGD} and Adam \citep{kingma2014adam}. We also verify empirically that this approximation describes rich regime learning dynamics well.

From Eq \eqref{eq:neg_grad_loss}, observe that $\text{sign} \Px{-\frac{\d \mathcal{L}_j}{\d f}} = 1$ given a label $y_j = 1$, and $\text{sign} \Px{-\frac{\d \mathcal{L}_j}{\d f}} = -1$ for the label $y_j = 0$. Recall from our hand-crafted solution that $a_i > 0$ implies that $\vr{w}_i$ should align with \textit{same} examples, and $a_i < 0$ implies that $\vr{w}_i$ should align with \textit{different} examples. Hence, if $\vr{w}_i \cdot \vr{x}_j > 0$, we conclude that $\xi_{ij} = 1$ if the example $\vr{x}_j$ matches the corresponding readout weight $a_i$ --- that is, $\vr{x}_j$ is \textit{same} and $a_i > 0$, or $\vr{x}_j$ is \textit{different} and $a_i < 0$. Otherwise, if there is a mismatch, $\xi_{ij} = -1$. In this way, we may interpret $\vr{w}_i$ as a ``state vector" to which we add or subtract examples $\vr{x}_j$ based on a simple set of update rules. We proceed to study the limiting form of $\vr{w}_i$ by treating it as a Markov process. Our approximation in Eq \eqref{eq:coeff_approx} decouples the dependency between hidden weight vectors. The set of hidden weights can be treated as an ensemble of independent Markov processes evolving in parallel, allowing us to understand the overall structure of the hidden weights.

\subsection{Markov process approximation}
Altogether, we summarize the learning dynamics on $\vr{w}$ through the following Markov process. In the remainder of this section, we drop the index $i$ from $\vr{w}_i$ and $a_i$, and $\vr{w}$ and $a$ should be understood as a representative sets of weights. Similarly, we write $\omega_k^p (t)$ to represent the coefficient of $\vr{v}^p$ (the $p$th partition of $\vr{w}$, $p \in \{ 1, 2\} $) for the $k$th training symbol after $t$ steps. The set of coefficients $\omega$ represent the state of the Markov process, which proceeds as follows. 

\paragraph{Step 1.} Initialize $\omega_k^p (0) = 0$ for all $k$, $p$. Initialize the time step $t = 0$. Initialize batch updates $b_k^p = 0$ and batch counter $n = 0$. Set the batch size $N$.

\paragraph{Step 2.} Sample an integer $u$ uniformly at random from the set $[L] = \{1, 2, \ldots, L\}$. 

With probability 1/2, set $v = u$. 

Otherwise, sample $v$ uniformly from $[L] \setminus \{u\}$.

\paragraph{Step 3.} Compute $\rho = \omega_u^1 (t) + \omega_v^2 (t)$.

If $\rho > 0$, proceed to step 4.

If $\rho = 0$, with probability $1/2$ proceed to step 4. Otherwise, proceed to step 5.

If $\rho < 0$, proceed to step 5.

\paragraph{Step 4.} If $a > 0$, $u = v$, or $a < 0$, $u \neq v$, update
\begin{align*}
    b_u^1 &\leftarrow b^1_u + 1 \,, \\
    b_v^2 &\leftarrow b^2_v + 1 \,.
\end{align*}
Otherwise, update
\begin{align*}
    b_u^1 &\leftarrow b^1_u - 1 \,, \\
    b_v^2 &\leftarrow b^2_v - 1 \,.
\end{align*}

\paragraph{Step 5.} Increment the batch counter $n \leftarrow n + 1$. 

If $n = N$,  Set $\omega_k^p(t + 1) \leftarrow \omega_k^p(t) + b_k^p$ and increment the time step $t \leftarrow t + 1$. Reset $n \leftarrow 0, b_k^p \leftarrow 0$.

Proceed to step 2.

\bigskip
\paragraph{Remarks.} This Markov process approximates the learning dynamics of an MLP given the simplification in Eq \eqref{eq:coeff_approx}. We elaborate on the link below.

In step 1, we initialize the weight vector to zero. In practice, weight vectors are initialized as $\vr{w} \sim \mathcal{N}(\vr{0}, \vr{I}/d)$. For large $d$, the condition required in Eq \eqref{eq:small_init_condition} allows us to approximate this as zero, with some caveats described below.

In step 2, we sample a training example. Because we operate in the basis spanned by training examples, we discard the vector content of a training symbol and consider only its index. With half the training examples being \textit{same} and half being \textit{different}, we sample indices accordingly.

In step 3, we consider the overlap $\vr{w} \cdot \vr{x}$, where $\vr{x} = (\vr{z}_u, \vr{z}_v)$. Given the assumptions in Eq \eqref{eq:small_init_condition}, we have that
\begin{align*}
    \rho &= \vr{w} \cdot \vr{x} \\
         &= \omega_u^1 \Nx{\vr{z}_u}^2 + \omega_v^2 \Nx{\vr{z}_v}^2 \\
         &= \omega_u^1 + \omega_v^2\,.
\end{align*}
If the overlap $\rho$ is positive, then the update coefficient $\xi \neq 0$, so we branch to the step where the state is updated. If $\rho$ is negative, we must have that $\xi = 0$, so we skip the update. If $\omega_u^1 + \omega_v^2 = 0$, the overlap still picks up the initialization, $\vr{w} \cdot \vr{x} = \vr{w} (0) \cdot \vr{x}$. In the limit $d \rightarrow \infty$, we have that $\vr{w} (0) \cdot \vr{x} \rightarrow 0$. However, $\vr{w}(0) \cdot \vr{x} \neq 0$ almost surely for any finite $d$. Because $\vr{w}(0)$ and $\vr{x}$ are radially symmetric about the origin, their overlap is positive with probability 1/2. Thus, when $\rho = 0$, we branch to the corresponding positive or negative overlap steps each with probability 1/2.

In step 4, we apply the updates from $\xi = \pm 1$, based on whether the training example $\vr{z}_u, \vr{z}_v$ matches the readout weight $a$. We conclude the training loop in step 5, and restart with a fresh example.

\subsection{Limiting weight structure}
With the setup now complete, we analyze the Markov process proposed above to understand the limiting structure of $\vr{w}$ and $a$. Specifically, we will show that for large $L$ and as $t \rightarrow \infty$,
\begin{enumerate}
    \item If $a > 0$, then $\Nx{\vr{v}_1} = \Nx{\vr{v}_2}$ and $\vr{v}^1 \cdot \vr{v}^2 / (\Nx{\vr{v}^1} \, \Nx{\vr{v}^2}) = 1$
    \item If $a < 0$, then $\Nx{\vr{v}_1} = \Nx{\vr{v}_2}$ and $\vr{v}^1 \cdot \vr{v}^2 /  (\Nx{\vr{v}^1} \, \Nx{\vr{v}^2}) = -1$
    \item Suppose $\overline{a^+}$ is the average over all weights $a > 0$, and $\overline{a^-}$ is the average over all weights $a < 0$. Then $|\overline{a^-}| > |\overline{a^+}|$.
\end{enumerate}

\noindent Our general approach will involve factorizing the Markov process into an ensemble of random walkers with simple dependencies, then reason about the long time-scale behavior of these walkers. For simplicity, we will focus on the single batch case $N = 1$. Generalizing to $N > 1$ is straightforward but notationally cluttered, and does not change the final result.

\subsubsection{\textit{Same} case}
We begin by examining weights $\vr{w}$ such that the corresponding readout $a > 0$. Recall that these weights favor \textit{same} examples. Consider a random walker on $\R$ whose position at time $t$ is given by $s_u (t) = \omega_u^1 (t) + \omega_u^2 (t)$. Then the following rules govern the walker's dynamics:
\begin{enumerate}
    \item If $s_u(t) > 0$ and the model receives a \textit{same} training example $(\vr{z}_u, \vr{z}_u)$, then $s_u(t + 1) = s_u(t) + 2$.
    \item If the model receives a \textit{different} training example $(\vr{z}_u, \vr{z}_v)$, where $u \neq v$ then $s_u(t + 1) \geq s(t) - 1$.
    \item If $s_u(T) < 0$, then $s_u(t) < 0$ for all $t > T$. 
\end{enumerate}
Rules 1 and 2 reflect the update dynamics of the Markov process. Since $s_u(t) > 0$, upon receiving a \textit{same} example $(\vr{z}_u, \vr{z}_u)$, we witness updates $\omega_u^1(t+1) = \omega_u^1(t) + 1$ and $\omega_u^2(t+1) = \omega_u^2(t) + 1$, so $s_u(t+1) = s_u(t) + 2$. Similarly, upon receiving a $\textit{different}$ example $(\vr{z}_u, \vr{z}_v)$, we have that $\omega_u^1(t+1) = \omega_u^1 (t) - 1$ if $\omega_u(t) + \omega_v(t) > 0$, so $s_u$ decreases at most by 1. Finally, for rule 3, if $s_u$ ever falls below 0, then it will never increment. Hence, $s_u$ will remain negative for all subsequent steps.

Together, these rules partition our ensemble of walkers $s_u$ into two sets: walkers with positive position $\mathcal{S}^+ (t) =  \Bx{s_u : s_u(t) > 0}$ and walkers with negative position $\mathcal{S}^- (t) = \Bx{s_v : s_v(t) < 0}$. We will show that under typical conditions, members of $\mathcal{S}^+$ grow continually more positive, while members of $\mathcal{S}^-$ grow continually more negative. We denote $n^+(t) = |\mathcal{S}^+ (t)|$ and $n^- (t) = |\mathcal{S}^- (t)|$. Where the meaning is unambiguous, we drop the indices $t$.

We first make the following counterintuitive observation about the relative occurrence of \textit{same} and \textit{different} examples. Although training examples are sampled from each class with equal probability, the probabilities of observing a \textit{same} or \textit{different} pair when conditioned on observing a particular training symbol are \textit{not} equal. Suppose we would like to count all training pairs that contain at least one occurrence of $\vr{z}_u$. Out of all $\textit{same}$ examples, we would expect roughly $\frac{1}{L}$ of such examples to contain $\vr{z}_u$. Out of all $\textit{different}$ examples, we would expect roughly $\frac{2}{L (L - 1)}$ occurrences of the pair $(\vr{z}_u, \vr{z}_v)$, for a specific $v \neq u$. Across all $L - 1$ possible $v$, this proportion rises to $\frac{2}{L (L - 1)} \cdot (L - 1) = \frac{2}{L}$. Hence, the probability of observing a $\textit{same}$ example conditioned on containing $\vr{z}_u$ is actually $\frac{1/L}{1/L + 2/L} = \frac{1}{3}$, while the probability of observing $\textit{different}$ is $\frac{2}{3}$.

Suppose we allow our Markov process to run for $t$ time steps, after which there are $n^+$ walkers in a positive position and $n^-$ walkers in a negative position, among $L = n^+ + n^- \equiv n$ total walkers. Upon receiving the next training example $(\vr{z}_u, \vr{z}_v)$, there are four possible outcomes.

\paragraph{Case 1.} The walker $s_u(t) > 0$ and we receive a $\textit{same}$ example (so $u = v$). In this case, $s_u(t+1) = s_u(t) + 2$. The probability of observing a \textit{same} pair containing $\vr{z}_u$ is $\frac{1}{3}$, so we summarize this case as 
\begin{equation} \label{eq:s1}
p(s_u \leftarrow s_u + 2 \,|\, s_u > 0) = \frac{1}{3} \,.
\end{equation}

\paragraph{Case 2.} The walker $s_u(t) > 0$ and we receive a $\textit{different}$ example (so $u \neq v$). Whether $s_u$ decrements in this case is complex to determine, and depends on the precise coordinates $\omega_u$ and $\omega_v$. We treat this issue coarsely by modeling the probability of decrement through an average case approximation: if $s_v > 0$, we assume that $s_u$ will always decrement; if $s_v < 0$, we assume that $s_u$ will decrement with some mean probability $\mu$. Since $p(s_v > 0) = \frac{n^+ - 1}{n - 1}$ and $p(s_v < 0) = \frac{n^-}{n - 1}$, and the probability of selecting a $\textit{different}$ example overall remains $2/3$, we summarize this case as
\begin{equation} \label{eq:d1}
p(s_u \leftarrow s_u - 1 \,|\, s_u > 0) = \frac{2}{3} \Px{\frac{n^+ - 1}{n - 1} + \frac{n^-}{n - 1} \mu} \,.
\end{equation}
This average case approximation is similar in flavor to the mean field \textit{ansatz} common in physics, and we employ it for similar reasons: it simplifies a complex many-bodied interaction into a simple interaction between a single body and an average field. We validate the accuracy of this approximation later below.

\paragraph{Case 3.} The walker $s_u(t) < 0$ and we receive a $\textit{same}$ example. No updates occur in this case. For completeness, we summarize it as
\begin{equation} \label{eq:s2}
p(s_u \leftarrow s_u + 2 \,|\, s_u < 0) = 0 \,.
\end{equation}
\paragraph{Case 4.} The walker $s_u(t) < 0$ and we receive a \textit{different} example. We again apply a coarse, average case approximation to model the probability of decrementing. If $s_v < 0$, we assume that $s_u$ will never decrement. If $s_v > 0$, the probability of decrementing again depends on our mean quantity $\mu$. The probability of selecting a \textit{different} example overall remain $2/3$, so we summarize this case as
\begin{equation} \label{eq:d2}
p(s_u \leftarrow s_u - 1 \,|\, s_u < 0) = \frac{2}{3} \Px{\frac{n^+}{n - 1} \mu} \,.
\end{equation}

To gain greater insight into $\mu$, we consider how much a walker's position may drift as it encounters different training examples. Define a walker's \textit{expected drift} to be the quantity $\Delta s (t) = \E[s(t + 1) - s(t)]$, averaged over possible walker states $s$. Then under Eq \eqref{eq:s1} and Eq \eqref{eq:d1}, considering a positive walker $s^+ > 0$
\begin{align}
    \Delta s^+ (t) &=  2 \, p(s^+ > 0)\, p(s^+ \leftarrow s^+ + 2 \,|\, s^+ > 0) \nonumber \\
    &\quad - p(s^+ > 0) p(s^+ \leftarrow s^+ - 1 \,|\, s^+ > 0) \nonumber \\
    &= \frac{2 n^+}{3n} \Px{1 - \frac{n^+ + n^- \mu - 1}{n - 1}} \nonumber \\ 
    &= \frac{2 n^+ n^- (1 - \mu)}{3n (n - 1)}\,. \label{eq:pos_drift}
\end{align} 
Similarly, under Eq \eqref{eq:s2} and Eq \eqref{eq:d2}, a negative walker $s^-$ has expected drift
\begin{align}
    \Delta s^- (t) &= - \, p(s^- < 0) \, p (s^- \leftarrow s^- - 1 \,|\, s^- < 0) \nonumber \\
    &= -\frac{2 n^- n^+ \mu}{3 n (n - 1)} \,. \label{eq:neg_drift}
\end{align}
Suppose $\mu = 1$. In this case, if we encounter a \textit{different} example $(\vr{z}_u, \vr{z}_v)$ such that $s_u > 0$ and $s_v < 0$, then $s_u$ will always decrement. On average, $\Delta s_u = 0$ while $\Delta s_v$ is a negative quantity, indicating that $s_u$ will on average remain around the same position while $s_v$ decreases. However, if $s_v$ decreases without bound, there comes a point where $\omega_u + \omega_v < 0$, preventing further decrements. This situation implies that $\mu = 0$, resulting in $\Delta s_u > 0$ and $\Delta s_v = 0$. However, if $s_u$ now increases without bound, there comes a point where $\omega_u + \omega_v > 0$, allowing again further decrements, raising our mean update probability back to $\mu = 1$. 

In general, if $|\Delta s_u| < |\Delta s_v|$, we experience further increments until $|\Delta s_u| > |\Delta s_v|$, at which point we experience further decrements, returning us back to $|\Delta s_u| < |\Delta s_v|$. Over a long time period, we might therefore expect our dynamics to settle around an average point $|\Delta s_u| = |\Delta s_v|$. If this is true, then we employ the relation $|\Delta s^+| = |\Delta s^-|$ as a self-consistency condition to solve for $\mu$. Equating \eqref{eq:pos_drift} and \eqref{eq:neg_drift} reveals that $\mu = \frac{1}{2}$. 

Altogether, we arrive at the following picture of the walkers' dynamics. Walkers $s_u$ at a positive position drift with an average rate
\begin{equation} \label{eq:drift_s}
\Delta s_u = \frac{n^+ n^-}{3 n(n - 1)} \,.
\end{equation}
Meanwhile, walkers $s_v$ at a negative position drift with an average rate $\Delta s_v = - \Delta s_u$. Over long time periods, $s_u \rightarrow \infty$ while $s_v \rightarrow -\infty$. Because positive updates increment both coordinates $\omega_u^1$ and $\omega_u^2$ equally, we have that $\omega_u^1 \approx \omega_u^2 > 0$. Meanwhile, because negative updates have a higher chance of decrementing a more positive coordinate, we also have that $\omega_v^1 \approx \omega_v^2 < 0$. In this way, we must have overall that $\Nx{\vr{v}_1} = \Nx{\vr{v}_2}$ and  $\vr{v}^1 \cdot \vr{v}^2 / (\Nx{\vr{v}^1} \, \Nx{\vr{v}^2}) = 1$ at long time scales, confirming that a weight vector aligned with $\textit{same}$ examples adopts parallel components.

To validate our key assumption on $\mu$, we simulate the Markov process $100$ times with $L = 16$ and a batch size of $512$. We empirically find that $\mu = 0.508 \pm 0.056$ (given by two standard deviations), matching closely our conjecture that $\mu = \frac{1}{2}$.

One caveat we have not addressed is the case where $n^- = n$. In this case, no further updates occur and the weights are frozen in their current position. However, as we suggest later in Section \ref{sec:magnitude_readouts}, the corresponding readout of these weights will be relatively small, reducing its impact. In practice, ``dead" weights like these that are negatively aligned with all training symbols appear to be rare in trained models.

\subsubsection{\textit{Different} case}
For weights $\vr{w}$ corresponding to a readout $a < 0$, similar rules hold but now with flipped signs. Considering again a random walker with position $s_u(t) = \omega_u^1 (t) + \omega_u^2 (t)$, the following rules govern the walker's dynamics:
\begin{enumerate}
    \item If $s_u(t) > 0$ and the model receives a training example $(\vr{z}_u, \vr{z}_u)$, then $s_u(t + 1) = s_u(t) - 2$.
    \item If the model receives a training example $(\vr{z}_u, \vr{z}_v)$ or its reverse $(\vr{z}_v, \vr{z}_u)$, where $u \neq v$ then $s_u(t + 1) \leq s(t) + 1$.
\end{enumerate}
The rules follow from the update dynamics of the Markov process precisely as before, now for weights sensitive to \textit{different} examples. Note, there is no equivalence to Rule 3 in this case, since a walker may (in most cases) continue to receive either positive or negative updates regardless of the sign of its position. Indeed, this added symmetry to the \textit{different} case simplifies the analysis somewhat compared to the \textit{same} case, where it was necessary to study the interactions between two ensembles of random walkers that evolve in different ways. Here, we may treat all walkers uniformly.

Our general approach for analyzing this case is the same. Conditioned on training examples containing the $u$th training symbol, recall that observing a \textit{same} pair $(\vr{z}_u, \vr{z}_u)$ has probability 1/3, and observing a \textit{different} pair $(\vr{z}_u, \vr{z}_v)$ has probability 2/3. Then we have the following cases:

\paragraph{Case 1.} The walker $s_u(t) > 0$ and receives a \textit{same} example. In this case, $s_u(t+1) = s_u(t) - 2$. The probability of observing a \textit{same} pair containing $\vr{z}_u$ is 1/3, so we summarize this case as
\begin{equation} \label{eq:s1_d}
    p(s_u \leftarrow s_u - 2 \,|\, s_u > 0) = \frac{1}{3} \,.
\end{equation}

\paragraph{Case 2.} The walker $s_u(t) > 0$ and we receive a $\textit{different}$ example. Whether $s_u$ increments is complex to determine, and depends on the precise coordinates $\omega_u$ and $\omega_v$. As before, we treat this issue coarsely by approximating the probability of incrementing through an average case parameter $\mu$. The probability of selecting a $\textit{different}$ example overall remains $2/3$, so we summarize this case as
\begin{equation} \label{eq:d1_d}
p(s_u \leftarrow s_u + 1 \,|\, s_u > 0) = \frac{2}{3} \mu \,.
\end{equation}

\paragraph{Case 3.} The walker $s_u(t) < 0$ and we receive a $\textit{same}$ example. No updates occur in this case. For completeness, we summarize it as
\begin{equation} \label{eq:s2_d}
p(s_u \leftarrow s_u - 2 \,|\, s_u < 0) = 0 \,.
\end{equation}

\paragraph{Case 4.} The walker $s_u(t) < 0$ and we receive a \textit{different} example. We again use our average case parameter to describe the probability of incrementing. The probability of selecting a \textit{different} example overall remains $2/3$, so we summarize this case as
\begin{equation} \label{eq:d2_d}
p(s_u \leftarrow s_u + 1 \,|\, s_u < 0) = \frac{2}{3} \mu \,.
\end{equation}

To obtain a self-consistent condition for $\mu$, we consider again the expected drift of walkers at positive or negative positions. After $t$ timesteps have elapsed, suppose the number of walkers with position $s^+ > 0$ is $n^+$, and the number of walkers with position $s^- < 0$ is $n^-$, where $L = n^+ + n^- \equiv n$. Then combining Eq \eqref{eq:s1_d} and \eqref{eq:d1_d}, the expected drift for positive walkers is
\begin{equation} \label{eq:pos_drift_d}
    \Delta s^+ = \frac{2 n^+}{3n} (\mu - 1) \,.
\end{equation}
Combining Eq \eqref{eq:s2_d} and \eqref{eq:d2_d}, the expected drift of the negative walkers is
\begin{equation} \label{eq:neg_drift_d}
    \Delta s^- = \frac{2 n^-}{3n} \mu
\end{equation}
Note, unlike the \textit{same} case, the expected drift for positive walkers is \textit{negative}, and the expected drift for negative walkers is \textit{positive}. Hence, for $\mu \in (0, 1)$, if $|\Delta s^+| > |\Delta s^-|$, the number of positive walkers decreases faster than it increases, so we eventually reach a point where $|\Delta s^+| \leq |\Delta s^+|$. However, when $|\Delta s^+| < |\Delta s^-|$, the number of negative walkers decreases faster than it increases, so we oscillate back to $|\Delta s^+| \geq |\Delta s^-|$. Over a long time period, we assume our walkers settle around an average point $|\Delta s^+| = |\Delta s^-|$. If this is true, then as before, we employ the relation $|\Delta s^+| = |\Delta s^-|$ as a self-consistency condition to solve for $\mu$. Equating \eqref{eq:pos_drift_d} and \eqref{eq:neg_drift_d} indicates that $\mu = \frac{n^+}{n}$.

There are three potential settings for $n^+$ to consider: $0 < n^+ < n$, $n^+ = n$, or $n^+ = 0$. Let us begin with $0 < n^+ < n$. Because $n^+ < n$, the average position of positive walkers experiences a net negative drift, gradually bringing them closer to zero. Because $n^+ > 0$, the average position of negative walkers experiences a net positive drift, gradually bringing them closer to zero also. Over a long period of time, we would therefore expect $n^+ \approx n^-$ (and $\mu \approx \frac{1}{2}$).

If $n^+ = n$, then all walkers are in a positive position and $\Delta s^+ = 0$. Over a long period of time, as the variance in walker position grows, through random chance at least one walker will eventually drift to a negative position, returning us to the case where $0 < n^+ < n$. If $n^+ = 0$, then all walkers are in a negative position, and $\Delta s^- > 0$. Over time, the walkers' average position grows more positive, until at least one becomes positive and we again re-enter the case $0 < n^+ < n$.

Altogether, we arrive at the following picture of the walker's dynamics. Walkers at a positive position drift with a negative rate down to zero, and walkers at a negative position drift with a positive rate up to zero. After sufficient time has elapsed, we would therefore expect the position of all walkers to be close to zero. However, for a walker $s_u$, positive updates increment the underlying coordinates $\omega_u^1$ and $\omega_u^2$ asymmetrically. Furthermore, a more positive coordinate receives a positive update with greater probability. Hence, we must have that $\omega_u^1 \gg \omega_u^2$ or $\omega_u^1 \ll \omega_u^2$. Because their sum must remain close to zero, it must be true that $\omega_u^1 \approx -\omega_u^2$. Thus, we have overall that $\Nx{\vr{v}_1} = \Nx{\vr{v}_2}$ and $\vr{v}^1 \cdot \vr{v}^2 / (\Nx{\vr{v}^1} \, \Nx{\vr{v}^2}) = -1$, confirming that a weight vector aligned with \textit{different} examples adopts antiparallel components.

To validate our key assumption on $\mu$, we simulate the Markov process $100$ times with $L = 16$ and a batch size of $512$. We empirically find that $\mu = 0.499 \pm 0.009$ (given by two standard deviations), matching closely our conjecture that $\mu = \frac{1}{2}$.

\subsubsection{Magnitude of readouts} \label{sec:magnitude_readouts}
The final piece to demonstrate in our study of the rich regime is that $|\overline{a^+}| < |\overline{a^-}|$, where $\overline{a^+}$ corresponds to the average across all positive readout weights and $\overline{a^-}$ corresponds to the average across all negative readout weights. Exactly characterizing these magnitudes is difficult, so we apply a heuristic argument based what we learned about the structure of parallel and antiparallel weights above and support it with numeric evidence.

Recall that the update rule for a readout weight $a_i$ is given by
\[\Delta a = - \frac{c}{N} \sum_{j=1}^N \frac{\d \mathcal{L}_j}{\d f} \phi (\vr{w} \cdot \vr{x}_j), \]
where $c = \frac{\alpha}{\gamma \sqrt{d}}$ and $\alpha$ is the learning rate. Suppose $\vr{x}_j = (\vr{z}_u, \vr{z}_v)$. Then the update rule becomes
\[\Delta a = - \frac{c}{N} \sum_{u, v} \frac{\d \mathcal{L}_{u,v}}{\d f} \phi (\omega_{u}^1 + \omega_{v}^2)\,. \]
If $\frac{\d \mathcal{L}}{\d f}$ is about the same in magnitude across all training examples, then our readout updates are proportional to
\[\Delta a \propto \sum_{u,v} S(u,v) \, \phi(\omega_u^1 + \omega_v^2) \,.\]
where
\[
S(u,v) = \begin{cases}
    1 & u = v \\
    -1 & u \neq v \,.
\end{cases}
\]

Let us first consider the case where $\vr{w}$ corresponds to a negative readout weight $a^-$. From above, we know that $\vr{w}$ is antiparallel. Hence, when encountering a same example, $\omega_u + \omega_u \approx 0$. If the magnitude of all coordinates are roughly equal, when encountering a different example, $\omega_u + \omega_v > 0$ about $1/4$ of the time. Comparing Eq \eqref{eq:drift_s} to Eq \eqref{eq:pos_drift_d}, we see that the expected drift for antiparallel weight vectors is roughly twice that of parallel weight vectors over long timescales. Altogether, since the number of \textit{same} and \textit{different} examples is balanced, we have overall that $|\Delta a^-| \propto 2 \cdot \frac{1}{4} = \frac{1}{2}$ for $a^- < 0$.

Now consider the case where $\vr{w}$ corresponds to a positive readout weight $a^+$. From above, we know that $\vr{w}$ is parallel. For large batch sizes $N$, the expected drift of a walker at initialization is 0\footnote{There is a 1/3 chance of observing a \textit{same} example, which increments by 2. There is a 2/3 chance of observing a \textit{different} example, which decrements by 1. Hence, the expected drift at initialization must be zero overall.}, so we expect $n^+ \approx n^-$. If the magnitude of all coordinates are roughly equal and $L$ is large, when encountering a same example, $\omega_u + \omega_v > 0$ about $1/2$ of the time. When encountering a different example, $\omega_u + \omega_v > 0$ about $1/4$ of the time.  Altogether, we would therefore expected $|\Delta a^+| \propto \frac{1}{2} - \frac{1}{4} = \frac{1}{4}$.

From this rough estimate, we find that $\frac{|\Delta a^-|}{|\Delta a^+|} \approx 2$ for average negative and positive readouts. Since the rate of increase for negative readouts tends to be larger than that of positive readouts, we would expect the magnitude of negative readouts to be similarly larger. In fact, if readouts start with small initialization, we may conjecture that $\frac{|\overline{a^-}|}{|\overline{a^+}|} \approx 2$. In practice, this quantity turns out to be about $1.56 \pm 0.09$ (with 2 standard deviations), computed from 10 runs \footnote{The MLP has width $m = 1024$, and inputs have dimension $d = 512$. There are $L = 32$ training symbols}.

Note in the case that $L$ is small (for instance, $L = 2$), our estimate $|\Delta a^+| \propto \frac{1}{4}$ breaks down since there may be only a single set of positive coordinates and no penalty is incurred on negative examples. In this case, we would have $|\Delta a^+| \propto \frac{1}{2}$, so $\frac{|\Delta a^-|}{|\Delta a^+|} = 1$. Indeed, this is exactly what we observe in the case where $L = 2$. Computing this quantity empirically yields $0.99 \pm 0.07$ (with 2 standard deviations), computed from 10 runs \footnote{As before, the MLP has width $m = 1024$, and inputs have dimension $d = 512$. There are $L = 2$ training symbols}. This outcome seems to be part of the reason why the model does not generalize well on the SD task with only $2$ symbols, despite developing parallel/antiparallel weights (Figure \ref{fig:l2_weights}). For $L \geq 3$, there seems to be sufficient pairs of positive coordinates in parallel weight vectors to restore the situation where $|\Delta a^-| > |\Delta a^+|$.

\begin{figure}
    \centering
    \includegraphics[width=0.6\linewidth]{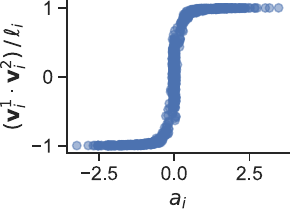}
    \caption{\textbf{Rich-regime weight structure when $L = 2$}. The model continues to develop parallel/antiparallel weights, though the magnitude of negative readouts is now about the same as the magnitude of positive readouts.}
    \label{fig:l2_weights}
\end{figure}

\subsection{Test accuracy prediction} \label{sec:rich_acc_approx}
We apply our knowledge on the structure of $\vr{w}$ and $a$ to  estimate the test accuracy of the rich regime model. Our derivation is heuristic, but seeks to capture broad phenomena rather than achieve exact precision. We validate our predicted test accuracy in Figure \ref{fig:rich}, demonstrating excellent agreement. 

Recall from Section \ref{sec:normative_solution} that a model achieving the hand-crafted solution exhibits perfect classification of \textit{same} examples. Any errors are therefore accumulated from misclassifying \textit{different} examples. The crux of our estimate stems from approximating the classification accuracy of \textit{different} examples.

Define $\mathcal{I}^+$ to be the set of weight indices $i$ such that $a_i > 0$, and define $\mathcal{I}^-$ to be the set of weight indices $j$ where $a_j < 0$. Let $\vr{x}$ be a \textit{different} example. Dropping constants that do not affect the outcome of a classification, our model becomes
\[f(\vr{x}) = \sum_{i \in I^+} |a_i|\, \phi (\vr{w}_i \cdot \vr{x}) - \sum_{j \in I^-} |a_j|\, \phi (\vr{w}_j \cdot \vr{x}) \,.\]
Define the weighted sums
\begin{align*}
    \overline{a^+}_* &= \frac{\sum_{i \in I^+} |a_i|\, \phi (\vr{w}_i \cdot \vr{x})}{\sum_{i \in I^+} \phi (\vr{w}_i \cdot \vr{x})} \,, \\
    \overline{a^-}_* &= \frac{\sum_{j \in I^-} |a_j|\, \phi (\vr{w}_j \cdot \vr{x})}{\sum_{i \in I^+} \phi (\vr{w}_j \cdot \vr{x})} \,. \\
\end{align*}
Then
\[f(\vr{x}) = \overline{a}^+_* \sum_{i \in I^+} \phi (\vr{w}_i \cdot \vr{x}) - \overline{a}^-_* \sum_{j \in I^-} \phi (\vr{w}_j \cdot \vr{x}) \,.\]
If the magnitudes of $\phi(\vr{w}_i \cdot \vr{x})$ are the same for all $i$ and all $\vr{x}$, then $\overline{a^+}_* = \overline{a^+} = \frac{1}{|I^+|} \sum_{i \in I^+} a_i$. Since $\vr{x}$ is an unseen \textit{different} example, by symmetry we conclude that $\overline{a^+}_* = \overline{a^+}$ is a reasonable approximation. The same applies for $\overline{a^-}_* = \overline{a^-}$. 

Since the magnitude of $f(\vr{x})$ does not affect its classification, we divide through by $\overline{a^+}$ to redefine our model as
\[f(\vr{x}) = \sum_{i \in I^+} \phi(\vr{w}_i \cdot \vr{x}) - \rho \sum_{j \in I^-} \phi(\vr{w}_j \cdot \vr{x}) \,,\]
where $\rho = \frac{|\overline{a^-}|}{|\overline{a^+}|}$. To calculate the probability of classifying an unseen \textit{different} example, we would like to estimate $p(f(\vr{x}) < 0)$, for $\vr{x} = (\vr{z}, \vr{z}')$ and $\vr{z}, \vr{z}' \sim \mathcal{N} (\vr{0}, \vr{I} / d)$.

Then from Eq \ref{eq:w_basis}
\[\phi (\vr{w}_i \cdot \vr{x}) = \phi \Px{\sum_{k=1}^L \Hx{\omega_{i,k}^1\, \vr{z}_k \cdot \vr{z} \pm \omega_{i,k}^2\, \vr{z}_k \cdot \vr{z}'}} \,. \]
Over the distribution of an unseen symbol $\vr{z}$
\[\vr{z}_k \cdot \vr{z} \overset{d}{=} -\vr{z}_k \cdot \vr{z} \,,\]
so we replace $\pm$ with simply $+$ in the summation. Summing across all weight vectors corresponding to the \textit{same} class yields
\begin{align*}
    \sum_{i \in I^+} \phi(\vr{w}_i \cdot \vr{x}) &= \sum_{i \in I^+} \phi \Px{\sum_{k=1}^L \Hx{\omega_{i,k}^1\, \vr{z}_k \cdot \vr{z} + \omega_{i,k}^2\, \vr{z}_k \cdot \vr{z}'}} \\
    &= \sum_{\vr{w}_i \cdot \vr{x} > 0} \,  \sum_{k=1}^L 
 \Hx{\omega_{i,k}^1\, \vr{z}_k \cdot \vr{z} + \omega_{i,k}^2\, \vr{z}_k \cdot \vr{z}'} \,. \\
\end{align*}
Since $\vr{w}_i \cdot \vr{x} > 0$, it is likely that $\vr{\omega}_{i,k}^1\, \vr{z}_k \cdot \vr{z} > 0$. Since $\vr{z}_k \cdot \vr{z}$ is approximately Normal at high $d$, we approximate the term $c_{i,k}^1 \equiv \omega_{i,k}^1 \vr{z}_k \cdot \vr{z}$ as a Half-Normal random variable. We therefore focus on characterizing the distribution of a sum over Half-Normal random variables, which we denote by $\overline{c^+}$:
\[\sum_{i \in \mathcal{I^+}} \phi(\vr{w}_i \cdot \vr{x}) \overset{d}{=} \overline{c^+} \equiv \sum_{i=1}^{|\mathcal{I}^+|} \sum_{k=1}^L \Hx{c_{i,k}^1 + c_{i,k}^2} \,.\]
The individual $c_{i,k}^1$ and $c_{i,k}^2$ are learned from a finite set of training symbols, so they cannot be independently distributed. However, since training symbols \textit{are} sampled independently, we would expect for any particular weight index $i = i_0$, the corresponding $c_{i_0,k}^1$ and $c_{i_0,k}^2$ are indeed independent. Hence, we have at least $2L$ independent terms for a particular index $i = i_0$.

The dependency structure across weight indices $i$ is more subtle, but we make a reasonable guess at their structure and later validate this heuristic with numerics. While our analysis in Appendix \ref{sec:rich_regime_details} assumed each weight vector evolves independently, in a training model they evolve based on the same set of inputs. As a result, significant correlations emerge across weight vectors. To understand these correlations, let us fix our training symbol index to $k = k_0$ and consider all terms $c_{i, k_0}^1 = \omega_{i,k_0}^1 \vr{z}_{k_0} \cdot \vr{z}$. Since they all share $\vr{z}_{k_0} \cdot \vr{z}$, these quantities are not strictly independent. However, after $T$ training steps, we have that $\omega_{i,k_0}^1 = \mathcal{O}(T)$ and $c_{i,k_0}^1 = \mathcal{O}(T)$ while $\vr{z}_{k_0} \cdot \vr{z} = \mathcal{O}(1)$, so for our approximation we will consider the dependency incurred from $\vr{z}_{k_0} \cdot \vr{z}$ as negligible.

Let us therefore turn our attention to the coordinates $\omega_{i,k_0}^1$. If the model only ever received \textit{same} inputs $(\vr{z}_{k_0}, \vr{z}_{k_0})$, all $\omega_{i,k_0}^1$ would be identical or zero across indices $i$. However, if we allow the model to witness \textit{different} inputs $(\vr{z}_{k_0}, \vr{z}_{\ell})$ for some $\ell \neq k_0$, we would expect a distribution of $\omega_{i,k_0}^1$ driven by the underlying initialization of $\vr{w}_i$ and the number of symbols $L$. Coordinates where $\omega_{i,k_0}^1 (0) + \omega_{i,1}^1 (0) > 0$ and $\omega_{i,k_0}^1 (0) + \omega_{i,2}^1 (0) < 0$ would evolve differently from coordinates where $\omega_{i,k_0}^1 (0) + \omega_{i,1}^1 (0) < 0$ and $\omega_{i,k_0}^1 (0) + \omega_{i,1}^2 (0) > 0$. If the number of training symbols increases, we would expect the number of independent coordinates $\omega_{i,k_0}^1$ to also increase. Given $L$ training symbols, we might therefore guess that the number of independent coordinates to be proportional to $L - 1$, for $L - 1$ symbols where $\ell \neq k_0$. However, we also need to account for the sign of $\vr{z}_{\ell} \cdot \vr{z}'$. If this quantity is positive and the corresponding readout is positive, then correlations resulting from the symbol $\vr{z}_\ell$ would be unimportant since they lower the probability that $\vr{w}_i \cdot \vr{x} > 0$, filtering them from the sum. (The reverse is true for weights corresponding to negative readouts.) Hence, roughly half the $L - 1$ symbols contribute to unique coordinates. The total number of unique coordinates $\omega_{i,k_0}^1$ is therefore approximately $\frac{1}{2}(L - 1)$.

If each of our $\frac{1}{2}(L - 1)$ independent coordinates carries $2L$ independent dot-product terms, we have
\[\overline{c^+} \overset{d}{=}\, \sum_{\ell=1}^{L \Px{L - 1}} c_\ell \,,\]
where $c_\ell$ is distributed Half-Normal with mean $0$ and some variance $\sigma^2$, which will cancel in the final calculation. 

Applying the central limit theorem together with the first and second moments of a Half-Normal distribution reveals that
\[\overline{c^+} \sim \mathcal{N} \Px{(L^2 - L) \sigma \sqrt{\frac{2}{\pi}} \,,\, (L^2 - L) \Px{1 - \frac{2}{\pi}}} \,.\]
As we noted before, $\vr{z}_k \cdot \vr{z} \overset{d}{=} \vr{z}_k \vr{z}'$ for large $d$, so the distribution of $\sum \phi(\vr{w} \cdot \vr{x})$ is the same regardless if the weight vectors $\vr{w}$ are parallel or antiparallel. Hence, $\overline{c^+} \overset{d}{=} \overline{c^-}$. The distribution of our output is therefore
\[f(\vr{x}) \overset{d}{=} \overline{c^+} - \rho \, \overline{c^-} \,.\]
Our final probability of classifying an unseen \textit{different} example correctly is
\begin{equation} \label{eq:rich_d_test_error}
p(f(\vr{x}) < 0) = \Phi\Px{\sqrt{\frac{(2L^2 - 2L) (\rho - 1)^2}{(\pi - 2) (\rho^2 + 1)}}} \,,
\end{equation}
where $\Phi$ is the CDF of a standard Normal.

From Section \ref{sec:magnitude_readouts}, we found that $\rho \approx 1.5$ for $L \geq 3$ and $\rho = 1$ for $L = 2$. Plugging this value into Eq \eqref{eq:rich_d_test_error} allows us to compute the probability of classifying an unseen \textit{different} example. If the model classifies all unseen \textit{same} examples correctly, the total test accuracy of the rich regime model is given by $\frac{1}{2} + \frac{1}{2} p(f (\vr{x}) < 0)$, yielding the expression we report in Eq \eqref{eq:rich_approx}. We validate this prediction in Figure \ref{fig:rich}, showing excellent agreement with the measured test accuracy of a rich model.

\begin{figure}
    \centering
    \includegraphics[width=0.8\linewidth]{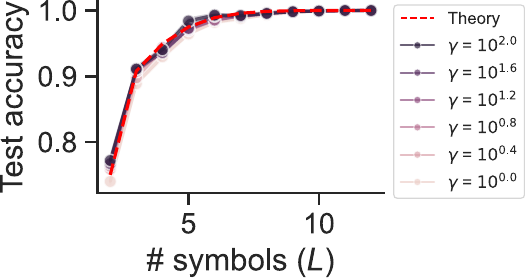}
    \caption{\textbf{Rich-regime test accuracy.} We demonstrate close agreement between the theoretically predicted and empirically measured rich-regime test accuracy. The rich-regime parametrization explored in the main text corresponds to $\gamma = 1$. To confirm that our results hold for arbitrarily rich models, we also plot accuracies attained in the \textit{ultra-rich} regime $\gamma \gg 1$ \citep{atanasov2024ultrarich}. In all cases, our predictions continue to hold.
    }
    \label{fig:rich}
\end{figure}

Two important details to note:
\begin{itemize}
    \item The test accuracy of the rich model rises rapidly with $L$. By $L = 3$, the model already attains over 90 percent test accuracy.
    \item The test error does not depend on the input dimension $d$. The impact of $d$ is captured in the variance of $\sigma^2$ of our Half-Normal random variables $c$, which cancels in the final calculation.
\end{itemize}
In this way, we see how the conceptual parallel/antiparallel representations of the same-different model lead to highly efficient learning and insensitivity to input dimension, completing our analysis of the rich regime.

\section{Lazy regime details} \label{sec:lazy_regime_details}

\renewcommand{\thefigure}{\ref*{sec:lazy_regime_details}\arabic{figure}}
\renewcommand{\theHfigure}{\ref*{sec:lazy_regime_details}\arabic{figure}}
\renewcommand{\theequation}{\ref*{sec:lazy_regime_details}.\arabic{equation}}

\setcounter{figure}{0}
\setcounter{equation}{0}

In the lazy regime, we will demonstrate that the model requires 1) far more training symbols than in the rich regime to learn the SD task, and 2) the model's test accuracy depends explicitly on the input dimension $d$.

A lazy MLP's learning dynamics can be described using kernel methods. In particular, the case where $\gamma \rightarrow 0$ corresponds to using the Neural Tangent Kernel (NTK) \citep{jacot_ntk}, in which weights evolve linearly around their initialization. We demonstrate that the number of training symbols required to generalize using the NTK grows quadratically with the input dimension.

Recall that our model has form
\[f(\vr{x}) = \sum_{i=1}^m a_i\, \phi(\vr{w}_i \cdot \vr{x}) \,.\]
If there are $P$ unique training examples in our dataset, we may rewrite our model in its dual form
\begin{equation} \label{eq:kernel_form}
f(\vr{x}) = \sum_{j=1}^P b_j \, K(\vr{x}, \vr{x}_j) \,,
\end{equation}
for the kernel
\[K(\vr{x}, \vr{x}_j) = \frac{1}{m} \sum_{i=1}^m \phi(\vr{w}_i \cdot \vr{x}) \, \phi(\vr{w}_i \cdot \vr{x}_j) \,.\]
For ease of exposition, we assume that inputs $\vr{x}$ lie on the unit sphere $\vr{x} \in \mathbb{S}^{2d - 1}$. This is exactly true (up to a constant radius) as $d \rightarrow \infty$. For width $m \rightarrow \infty$ and ReLU activations $\phi$, the analytic form of the NTK kernel $K$ is known to be
\[K(u) = u \Px{1 - \frac{1}{\pi} \cos^{-1} (u)} + \frac{1}{2\pi} \sqrt{1 - u^2}\,.\]
where $u = \vr{x} \cdot \vr{x}'$ \citep{cho_saul_kernel}.

With the setup complete, we present our central result.
\begin{theorem}
    Let $f$ be an infinite-width ReLU MLP as given in Eq \ref{eq:kernel_form}, with an NTK kernel. Suppose inputs $\vr{x}$ are restricted to lie on the unit sphere $\vr{x} \in \mathbb{S}^{2d - 1}$. If $f$ is trained on a dataset consisting of $P$ points constructed from $L$ symbols with input dimension $d$, then the test error of $f$ is upper bounded by $\mathcal{O} \Px{\exp \Bx{-\frac{L}{d^2}}}$.
\end{theorem}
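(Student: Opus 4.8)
\section*{Proof proposal}

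The plan is to analyse the lazy dynamics through its kernel predictor $f(\vr{x}) = \sum_{j} b_j\, K(\vr{x}\cdot\vr{x}_j)$, with coefficients $\vr{b}$ fit to the training labels (equivalently $\vr{b} = \vr{K}^{-1}\vr{y}$ for the ridgeless interpolant, or the analogous max-margin coefficients), and reduce the probability of misclassifying a fresh test example to a Gaussian-tail estimate on a scalar margin. Since all inputs lie on $\mathbb{S}^{2d-1}$ and, in the noiseless setting with $d$ large, any two distinct training symbols obey $\vr{z}_i\cdot\vr{z}_j = \delta_{ij} + \mathcal{O}(1/\sqrt d)$, every inner product between two distinct training inputs is either exactly $1/2$ up to $\mathcal{O}(1/\sqrt d)$ (when the two inputs carry the same symbol in the same half) or $\mathcal{O}(1/\sqrt d)$ otherwise. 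The first step is to Taylor-expand $K$ about $0$, $K(u) = K(0) + K'(0)u + \tfrac12 K''(0)u^2 + \mathcal{O}(u^3)$, where one checks $K(0), K'(0), K''(0)$ are all nonzero for the ReLU NTK. On a fresh test point, whose two halves are generically orthogonal to every training symbol, all inner products $\vr{x}\cdot\vr{x}_j$ are $\mathcal{O}(1/\sqrt d)$, so $f(\vr{x})$ is determined up to negligible error by the constant, linear, and quadratic pieces.

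Second, I would decompose $f$ at the test point into a deterministic offset $K(0)\sum_j b_j$ (absorbed into the decision threshold), a mean-zero linear fluctuation $K'(0)\sum_j b_j(\vr{x}\cdot\vr{x}_j)$, and the quadratic term $\tfrac12 K''(0)\sum_j b_j(\vr{x}\cdot\vr{x}_j)^2$. The key point is that only the quadratic term has a conditional mean that separates a \textit{same} from a \textit{different} test point: computing $\E[(\vr{x}\cdot\vr{x}_j)^2]$ over the test point's fresh symbols, the expectations for \textit{different} training inputs (and the cross-symbol contributions) agree in the two cases, and the only systematic gap comes from the \textit{same} training inputs and is of order $1/d$ per such input. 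This reduces the problem to (i) estimating the scale of the dual coefficients, in particular $\sum_{j\in\mathrm{same}} b_j$ and $\Nx{\vr{b}}^2$, from the structure of $\vr{K}^{-1}$, and (ii) bounding the variance of the linear and quadratic fluctuations. For (ii) the scalars $\vr{x}\cdot\vr{x}_j$, viewed as functionals of the test point's Gaussian halves, are near-orthogonal across distinct $j$, so the relevant quadratic forms concentrate (Hanson--Wright) and the fluctuation variance reduces to $\Nx{\vr{b}}^2$ times a $1/d^2$ factor. Combining the signal gap $\Delta$ with this variance yields an effective squared signal-to-noise ratio $\Delta^2/\mathrm{Var}$ that is at least of order $L/d^2$: the $L$ training symbols furnish of order $L$ independent ``probes'' of the test point's degree-two structure, each carrying signal of order $1/d$, while $d^2$ sets the scale of the noise floor from the remaining directions. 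A sub-Gaussian tail bound $\Pr[\text{misclassify}] \le \exp(-c\,\Delta^2/\mathrm{Var})$ then gives the claimed $\mathcal{O}\Px{\exp[-L/d^2]}$.

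The main obstacle is step (i): controlling $\vr{K}^{-1}$, and hence the dual coefficients, when the training set contains up to $\Theta(L^2)$ \textit{different} pairs with a dense symbol-sharing pattern---each \textit{same} input $(\vr{z}_i;\vr{z}_i)$ has kernel value $\approx K(1/2)$, which is \emph{not} small, with the $\Theta(L)$ \textit{different} pairs that share its symbol, yet must receive the opposite label---on top of an accumulated $\mathcal{O}(1/\sqrt d)$ near-orthogonality perturbation that becomes non-negligible precisely in the regime $L \sim d$ of interest. I would attack this using the high symmetry of the construction: up to relabelling the $L$ symbols and swapping the two halves, $\vr{K}$ is invariant under the induced group action, so its spectrum splits into a small number of symmetry sectors with explicitly computable eigenvalues (the $K(1/2)$-blocks form a balanced-incomplete-block-design-type matrix), after which the $\mathcal{O}(1/\sqrt d)$ perturbation is absorbed by a standard matrix-perturbation argument once the unperturbed spectral gaps are known. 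If a sharp constant is not needed, a shortcut suffices: exhibit explicit near-interpolating coefficients $b_j \propto (2y_j-1)$, verify directly that the resulting margin has the stated signal-to-noise scaling, and then recover the exact interpolant by a perturbation argument off these coefficients.
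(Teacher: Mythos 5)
Your core analytic ingredients do overlap with the paper's: both arguments work in the kernel dual form, Taylor-expand the ReLU NTK around the $\mathcal{O}(1/\sqrt{d})$ overlaps with a fresh test point, locate the usable signal in the second-moment (quadratic) term --- an order-$1/d$ gap contributed by training inputs that share a symbol with the test point --- and convert an order-$P/d$ mean margin into an exponential tail, with $L \propto P$ giving the stated rate. But your main route has a genuine gap that the paper deliberately sidesteps: you aim to control the actual interpolant $\vr{b} = \vr{K}^{-1}\vr{y}$, and you yourself flag that inverting $\vr{K}$ --- with $\Theta(L^2)$ \textit{different} pairs whose symbol-sharing produces dense $K(1/2)$-sized off-diagonal entries, on top of $\mathcal{O}(1/\sqrt{d})$ perturbations in the regime of interest --- is the hard part; the symmetry-sector/block-design/matrix-perturbation plan is only sketched, and nothing in the proposal actually delivers the needed control of $\sum_{j \in \mathrm{same}} b_j$ or $\Nx{\vr{b}}$. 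The paper never touches $\vr{K}^{-1}$ or the trained coefficients at all. Its proof is purely existential over a restricted dual class: it fixes two-valued coefficients $b^+ > 0$, $b^- < 0$ and, crucially, zeroes out all but a sparse sub-collection of training points chosen so that the surviving \textit{same} examples and \textit{different} examples use disjoint, non-overlapping symbols. That pruning is what makes the surviving summands (approximately) independent, so a one-line Hoeffding bound with $\E[f(\vr{x})] = \mathcal{O}(P/d)$ and $\mathcal{O}(1)$ summand range yields $\exp\Bx{-P/d^2}$, and it is also what forces $L \propto P$, i.e.\ the $L$ in the exponent.

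Your fallback ``shortcut'' ($b_j \propto 2y_j - 1$, verify the margin, then perturb to the exact interpolant) is close in spirit to the paper's construction, but as stated it keeps all training points, so the summands are strongly dependent through shared symbols and a sub-Gaussian tail does not follow from near-orthogonality of the overlaps alone; and the final step ``recover the exact interpolant by a perturbation argument'' is again the unproven hard part --- the paper never claims anything about the interpolant itself (which is partly why the main-text statement is labeled informal). Two smaller but real bookkeeping issues: the offset $K(0)\sum_j b_j$ cannot simply be ``absorbed into the decision threshold,'' since the classifier thresholds at zero and for a generic $\vr{b}$ this offset is order $\sum_j |b_j|$ unless the positive and negative coefficients nearly balance (the paper handles this via its explicit constraint relating $|b^-|$ and $|b^+|$); and the linear fluctuation $K'(0)\sum_j b_j\,(\vr{x}\cdot\vr{x}_j)$ has variance of order $\Nx{\vr{b}}^2/d$, not $\Nx{\vr{b}}^2/d^2$, so the claimed $L/d^2$ signal-to-noise ratio does not follow from the variance accounting you give; in the paper the $d^2$ arises instead from Hoeffding applied with the $\mathcal{O}(1)$ range of each independent summand against an $\mathcal{O}(P/d)$ mean.
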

\begin{proof}
Our proof strategy proceeds as follows. We restrict the space over which our dual coefficients $b$ can vary to a convenient subset, and upper bound the achievable test error of $f$ over this restricted parameter space. Because the restricted parameter space is a subset of the full parameter space, our derived upper bound applies to the unrestricted model as well.

We restrict the dual coefficients $b$ as follows. Let $I^+$ be the set of all indices $i$ such that $\vr{x}_i$ is a \textit{same} example, and $I^-$ be the set of all indices $j$ such that $\vr{x}_j$ is \textit{different}. Then for all $i \in I^+$, we fix $b_i = b^+ > 0$. For all $j \in I^-$, we fix $b_j = b^- < 0$. Hence, we effectively tune just two parameters: $b^+$ and $b^-$.

We also set a number of coefficients $b$ to zero. Given a dataset with symbols $\vr{z}_1, \vr{z}_2, \ldots, \vr{z}_L$, partition the symbols such that set $\mathcal{S}_1 = \Bx{\vr{z}_1, \vr{z}_2, \ldots \vr{z}_{L/3}}$ and $\mathcal{S}_2 = \Bx{\vr{z}_{L/3 + 1}, \vr{z}_{L/3 + 2}, \ldots \vr{z}_{L}}$. Consider the kernel coefficient $b_k$, which corresponds to a training example $\vr{x}_k = (\vr{z}_{\ell_1}; \vr{z}_{\ell_2})$. If $\vr{z}_{\ell_1} = \vr{z}_{\ell_2}$ and $\vr{z}_{\ell_1} \notin \mathcal{S}_1$, then we fix $b_k = 0$. If $\vr{z}_{\ell_1} \neq \vr{z}_{\ell_2}$, then we check three conditions: (1) $\vr{z}_{\ell_1} \in \mathcal{S}_2$, (2) $\ell_2 - \ell_1 = 1$, and $\ell_1$ is odd. If any one of these conditions is violated, we set $b_k = 0$.

This procedure for deciding whether $b_k = 0$ ensures that the remaining nonzero terms in Eq \eqref{eq:kernel_form} are independent, and that there are an equal number of \textit{same} and \textit{different} examples remaining. The set $\mathcal{S}_1$ determines the symbols that contribute to \textit{same} examples. The disjoint set $\mathcal{S}_2$ determines the symbols that contribute to \textit{different} examples. We further stipulate that \textit{different} examples do not contain overlapping symbols, leading to the three conditions enumerated above. Note, to construct a dataset such that there are $P$ nonzero terms in our kernel sum, we require $L \propto P$ symbols.

First, suppose $\vr{x}$ is a \textit{same} test example. Since we restricted the summands to be independent in our kernel function, the probability of mis-classifying $\vr{x}$ can bounded through a straightforward application of Hoeffding's inequality
\begin{equation} \label{eq:prob_false_neg}
    p(f(\vr{x}) < 0) \leq \exp \Bx{-\frac{2 \E [f(\vr{x})]^2}{P c^2}}\,
\end{equation}
where $P$ is the size of the training set and $c$ is a constant related to the range of individual summands $b_j K(\vr{x}, \vr{x}_j)$. Note, $b_j$ can be arbitrarily small without changing the classification and $0 \leq K(u) < 3$, so $c$ is finite. Distributing the expectation, we have
\begin{equation} \label{eq:exp_kernel_func}
\E [f(\vr{x})] = \sum_{i \in I^+} b^+\, \E [K(\vr{x}, \vr{x}_i)] + \sum_{j \in I^-} b^- \, \E[K(\vr{x}, \vr{x}_j)] \,.
\end{equation}
Taylor expanding $K$ to second order in $u$ reveals that
\begin{equation} \label{eq:kernel_taylor_expansion}
\E[K(u)] = \frac{1}{2\pi} + \frac{\E[u]}{2} + \frac{3 \E[u^2]}{4 \pi} + o(\E[u^2])
\end{equation}
Since input symbols are normally distributed with mean zero, we know that $\E[u] = 0$ and $\E[u^2] \propto 1/d$. Furthermore, if $\vr{x}_j$ is a \textit{same} training example and $\vr{x}_k$ is a \textit{different} training example, inspecting second moments reveals that $\E[(\vr{x} \cdot \vr{x}_j)^2] = 2 \E[(\vr{x} \cdot \vr{x}_k)^2]$, for an unseen \textit{same} example $\vr{x}$. Thus, provided that $|b^- / b^+| < 2$, substituting \eqref{eq:kernel_taylor_expansion} into \eqref{eq:exp_kernel_func} yields
\[\E[f(\vr{x})] = \mathcal{O} \Px{\frac{P}{d}} \,,\]
which implies that
\[p(f(\vr{x}) < 0) \leq \mathcal{O} \Px{\exp \Bx{-\frac{P}{d^2}}}\,.\]

Now suppose $\vr{x}$ is a \textit{different} test example. If $x^+$ is a \textit{same} training example and $x^-$ is a \textit{different} training example, then the first and second moments of $\vr{x} \cdot \vr{x}^+$ are equal to that of $\vr{x} \cdot \vr{x}^-$. Hence, \eqref{eq:kernel_taylor_expansion} and \eqref{eq:exp_kernel_func} suggest that if $|b^-| - |b^+| = \mathcal{O}(1)$, then $\E[f(\vr{x})] = - \mathcal{O}(P)$. Applying Hoeffding's a second time suggests that
\[p(f(\vr{x}) > 0) = \mathcal{O}\Px{\exp \Bx{-P}} \,.\]
Note, it is possible to satisfy both $|b^-| - |b^+|$ and $|b^- / b^+| < 2$, for example with $b^+ = 1$ and $b^- = 1.1$. 

The test error overall is dominated by the contribution from mis-classifying $\textit{same}$ examples $p(f(\vr{x}) < 0) = \mathcal{O}(\exp \Bx{-P / d^2})$. Because of our independence restriction on the dual coefficients $b$, in order to produce $P$ training examples, we require $L \propto P$ training symbols. The test error of the model overall is therefore upper bounded by $\mathcal{O}\Px{\exp\Bx{-\frac{L}{d^2}}}$.
\end{proof}
Hence, in order to maintain a constant error rate, our bound suggests that the number of training symbols $L$ should scale as $L \propto d^2$. While this scaling is an upper bound on the true error rate of a lazy model, Figure \ref{fig:rich_lazy}f suggests that this quadratic relationship remains descriptive of the full model. There are two important consequences of this result:
\begin{enumerate}
    \item For a large $d$, the lazy model requires substantially more training symbols to learn the SD task than the rich model. In Appendix \ref{sec:lazy_regime_details}, we found that the rich model can generalize with as few as $L = 3$ symbols. In contrast, Figure \ref{fig:rich_lazy}f suggests the lazy model will often require hundreds or thousands of training symbols to generalize.
    \item For a fixed number of training symbols, a lazy model's performance decays as $d$ increases. Unlike in the rich case, there is an explicit dependency on $d$ in the test error for the lazy model, hurting its performance as $d$ grows larger.
\end{enumerate}
In this way, we see how a lazy model can leverage the differing statistics of \textit{same} and \textit{different} examples to accomplish the SD task, but at the cost of exhaustive training data and strong sensitivity to input dimension.

\nolinenumbers
\section{Bayesian posterior calculations} \label{sec:bayesian_derivation}

\renewcommand{\thefigure}{\ref*{sec:bayesian_derivation}\arabic{figure}}
\renewcommand{\theHfigure}{\ref*{sec:bayesian_derivation}\arabic{figure}}
\renewcommand{\theequation}{\ref*{sec:bayesian_derivation}.\arabic{equation}}

\setcounter{figure}{0}
\setcounter{equation}{0}

In Section \ref{sec:bayes}, we compute with the posteriors corresponding to two different idealized models: one that generalizes to novel symbols based on the true underlying symbol distribution, and one that memorizes the training symbols. Below, we present the Bayes optimal classifier for our noisy same different, and derive the posteriors associated with these two settings.

\subsection{Generalizing prior}
We define the following data generating process that constitutes a prior which generalizes to arbitrary, unseen symbols.
\begin{align*}
    r &\sim \text{Bernoulli} \Px{p = \frac{1}{2}} \\
    \vr{s}_1 &\sim \mathcal{N}\Px{0, \frac{1}{d}} \\
    \vr{s}_2 &\sim \begin{cases}
        \delta(\vr{s}_1) & r = 1 \\
        \mathcal{N}\Px{0, \frac{1}{d}} & r = 0 \\
    \end{cases} \\
    \vr{z}_1 &\sim \mathcal{N}\Px{\vr{s}_1, \frac{\sigma^2}{d}} \\
    \vr{z}_2 &\sim \mathcal{N}\Px{\vr{s}_2, \frac{\sigma^2}{d}}
\end{align*}
The quantity $r$ represents either a \textit{same} or \textit{different} relation. Variables $\vr{s}_1, \vr{s}_2$ are symbols matching their description in Section \ref{sec:setup}. The notation $\delta(\vr{s}_1)$ denotes a Delta distribution centered at $\vr{s}_1$. Hence, $\vr{s}_1 = \vr{s}_2$ if $r = 1$, and differ otherwise. Typically, we consider the noiseless case $\sigma^2 = 0$, but to develop a Bayesian treatment, we allow $\sigma^2 > 0$. We approximate the noiseless case by considering $\sigma^2 \rightarrow 0$.

The Bayes optimal classifier is
\begin{equation} \label{eq:bayes_classifier}
\hat{y}_{bayes} = \begin{cases}
    1 & p(r = 1 \,|\, \vr{z}_1, \vr{z}_2) \geq \frac{1}{2} \\
    0 & \text{otherwise}
\end{cases}
\end{equation}
From Bayes rule, we know that
\[p(r \,|\, \vr{z}_1, \vr{z}_2) \propto p(\vr{z}_1, \vr{z}_2 \,|\, r) \, p(r) \,.\]
Since $r$ is sampled with equal probability $1$ or $0$, we have simply
\[p(r \,|\, \vr{z}_1, \vr{z}_2) \propto p(\vr{z}_1, \vr{z}_2 \,|\, r) \,.\]
We use the notation $\mathcal{N}(\vr{x}; \vr{\mu}, \sigma^2)$ to mean the PDF of a Normal distribution evaluated at $\vr{x}$, with mean $\vr{\mu}$ and covariance $\sigma^2 \vr{I}$. We then compute

\begin{widetext}
\begin{align}
    p(\vr{z}_1, \vr{z}_2 \,|\, r = 1) &= \int \mathcal{N}\Px{\vr{z}_1; \vr{s}, \frac{\sigma^2}{d}} \,\mathcal{N}\Px{\vr{z}_2; \vr{s}, \frac{\sigma^2}{d}}   \,\mathcal{N}\Px{\vr{s};\vr{0}, \frac{\sigma^2}{d}}\,d\vr{s} \nonumber \\
    &= \Px{\frac{d}{2 \pi \sqrt{\sigma^2 (2 + \sigma^2)}}}^{d} \exp\Bx{-\frac{d}{2 \sigma^2} \Px{\frac{1 + \sigma^2}{2 + \sigma^2} \Px{\Nx{\vr{z}_1}^2 + \Nx{\vr{z}_2}^2} - \frac{2}{2 + \sigma^2} \Px{\vr{z}_1 \cdot \vr{z}_2}}} \label{eq:bayes_gen_same} \,,\\
    p(\vr{z}_1, \vr{z}_2 \,|\, r = 0) &= \int \int \mathcal{N}\Px{\vr{z}_1; \vr{s}_1, \frac{\sigma^2}{d}} \,\mathcal{N}\Px{\vr{z}_2; \vr{s}_2, \frac{\sigma^2}{d}}   \,\mathcal{N}\Px{\vr{s}_1;\vr{0}, \frac{\sigma^2}
    {d}}
    \,\mathcal{N}\Px{\vr{s}_2;\vr{0}, \frac{\sigma^2}
    {d}}
    \,d\vr{s}_1 \, d\vr{s}_2 \nonumber \\
    &= \Px{\frac{d}{2 \pi (1 + \sigma^2)}}^d \exp \Bx{-\frac{d}{2} \Px{\frac{1}{1 + \sigma^2}} \Px{\Nx{\vr{z}_1}^2 + \Nx{\vr{z}_2}^2}} \label{eq:bayes_gen_diff} \,.
\end{align}
\end{widetext}

Using Eq \eqref{eq:bayes_gen_same} and \eqref{eq:bayes_gen_diff}, we compute
\[p (r = 1 \,|\, \vr{z}_1, \vr{z}_2) = \frac{p(\vr{z}_1, \vr{z}_2 \,|\, r = 1)}{p(\vr{z}_1, \vr{z}_2 \,|\, r = 1) + p(\vr{z}_1, \vr{z}_2 \,| r = 0)} \,,\]
which we plug back into Eq \eqref{eq:bayes_classifier} to obtain our Bayes classifier under a generalizing prior.

\subsection{Memorizing prior}
The data generating process for a model that memorizes the training data is similar to the generalizing model, but the crucial difference is that the symbols $\vr{s}$ are now distributed uniformly across the training symbols rather than sampled from their population distribution.

Let $\vr{\hat{s}}_1, \vr{\hat{s}}_2, \ldots, \vr{\hat{s}}_L$ be the set of $L$ training symbols. Then the data generating process is given by
\begin{align*}
    r &\sim \text{Bernoulli} \Px{p = \frac{1}{2}} \\
    \vr{s}_1 &\sim \text{Uniform}\Bx{\vr{\hat{s}}_1, \vr{\hat{s}}_2, \ldots, \vr{\hat{s}}_L} \\
    \vr{s}_2 &\sim \begin{cases}
        \delta(\vr{s}_1) & r = 1 \\
        \text{Uniform}\Bx{\vr{\hat{s}}_1, \vr{\hat{s}}_2, \ldots, \vr{\hat{s}}_L} \setminus \Bx{\vr{s}_1} & r = 0\\
    \end{cases} \\
    \vr{z}_1 &\sim \mathcal{N}\Px{\vr{s}_1, \frac{\sigma^2}{d}} \\
    \vr{z}_2 &\sim \mathcal{N}\Px{\vr{s}_2, \frac{\sigma^2}{d}}
\end{align*}

As before, we compute the probabilities $p(\vr{z}_1, \vr{z}_2 \,|\, r = 1)$ and $p(\vr{z}_1, \vr{z}_2, \,|\, r = 0)$, which are given by
\clearpage

\begin{widetext}
\begin{align}
    p(\vr{z}_1, \vr{z}_2 \,|\, r = 1) &= \frac{1}{L} \sum_{i=1}^L p(\vr{z}_1, \vr{z}_2 \,|\, \vr{\hat{s}}_i) \nonumber \\
    &= \frac{1}{L} \sum_{i=1}^L \mathcal{N}\Px{\vr{z}_1; \vr{\hat{s}}_i, \frac{\sigma^2}{d}} \, \mathcal{N}\Px{\vr{z}_2; \vr{\hat{s}}_i, \frac{\sigma^2}{d}} \nonumber \\
    &= \Px{\frac{d}{2 \pi \sigma^2}}^d \exp \Bx{-\frac{d}{2 \sigma^2} \Px{\frac{1}{2} \Px{\Nx{\vr{z}_1}^2 + \Nx{\vr{z}_2}}^2 - \vr{z}_1 \cdot \vr{z}_2}} \Px{\frac{1}{L} \sum_{i=1}^L \exp \Bx{-\frac{d}{\sigma^2} \Nx{\vr{\hat{s}}_i - \frac{\vr{z}_1 + \vr{z}_2}{2}}^2}} \label{eq:bayes_mem_same}\,, \\
    p(\vr{z}_1, \vr{z}_2 \,|\, r = 0) &= \frac{1}{L (L-1)} \sum_{i \neq j} p(\vr{z}_1 \,|\, \vr{\hat{s}}_i) \, p(\vr{z}_2 \,|\, \vr{\hat{s}}_j) \nonumber \\
    &= \frac{1}{L(L-1)} \sum_{i \neq j}^L \mathcal{N}\Px{\vr{z}_1; \vr{\hat{s}}_i, \frac{\sigma^2}{d}} \, \mathcal{N}\Px{\vr{z}_2; \vr{\hat{s}}_j, \frac{\sigma^2}{d}} \nonumber \\
    &= \frac{1}{L(L-1)} \sum_{i \neq j} \Px{\frac{d}{2 \pi \sigma^2}}^d \exp \Bx{-\frac{d}{2\sigma^2} \Nx{\vr{z}_1 - \vr{\hat{s}}_i}^2} \exp \Bx{-\frac{d}{2\sigma^2} \Nx{\vr{z}_2 - \vr{\hat{s}}_j}^2} \label{eq:bayes_mem_diff} \,.
\end{align}
\end{widetext}

Using Eq \eqref{eq:bayes_mem_same} and \eqref{eq:bayes_mem_diff}, we compute Eq \eqref{eq:bayes_classifier} to obtain our Bayes classifier under a memorizing prior.

\linenumbers
\section{Rich and lazy scaling} \label{sec:rich_and_lazy_scaling}

\renewcommand{\thefigure}{\ref*{sec:rich_and_lazy_scaling}\arabic{figure}}
\renewcommand{\theHfigure}{\ref*{sec:rich_and_lazy_scaling}\arabic{figure}}
\renewcommand{\theequation}{\ref*{sec:rich_and_lazy_scaling}.\arabic{equation}}

\setcounter{figure}{0}
\setcounter{equation}{0}

We review rich and lazy regime scaling in our setting. In particular, we consider learning dynamics as we increase the input dimension $d$ \citep{saad_learning_soft_comm,biehl1995learning_online_gd,goldt2019dynamics}. This setting differs from other rich-regime studies, where scaling is considered with respect to increasing width $m$. In particular, \textit{maximal update} ($\mu$P) and the related mean-field parameterizations consider an infinite-width limit \citep{yang2021feature_learning,mei2018mean_field,rotskoff2022trainability}. Our analysis holds $m$ fixed.

Recall that our model is given by
\[f(\vr{x}; \vr{\theta}) = \frac{1}{\gamma \sqrt{d}} \sum_{i=1}^m a_i \, \phi(\vr{w}_i \cdot \vr{x}) \,.\]
Let $\vr{\theta}(t)$ be the value of the parameters $\vr{\theta}$ at time-step $t$. Crucially, to permit a valid interpolation between rich and lazy learning regimes, our MLP is \textit{centered}: $f(\vr{x}; \vr{\theta}(0)) = 0$. Following \citet{chizat2019lazy_rich}, we enforce centering by subtracting the initial logit from every prediction. Hence, our classifier takes the form
\begin{align*}
\tilde{f}(\vr{x}; \vr{\theta} ) &= f(\vr{x}; \vr{\theta}) - f(\vr{x}; \vr{\theta}(0)) \,.
\end{align*}
We use $\tilde{f}$ as our centered MLP in all experiments.

To see how changing $\gamma$ interpolates between rich and lazy learning regimes, recall that learning richness is a description of activation change over the course of training. One way to operationalize this description is to define \textit{rich} learning as the case in which parameters $\vr{\theta}$ change substantially in comparison with changes in the model output $\tilde{f}$, and \textit{lazy} learning as the case in which $\vr{\theta}$ change very little with respect to the model output. 

\begin{figure*}[h]
    \centering
    \includegraphics[width=0.75\linewidth]{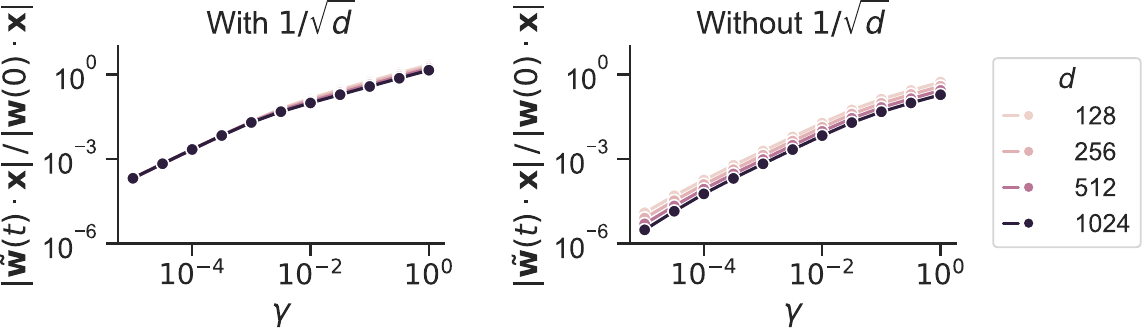}
    \caption{\textbf{Activation scales with $\gamma$ and $d$}. We plot the average absolute activation change across 6000 test examples as a function of $\gamma$ (for $m = 4096$), normalized by the initial activation size $|\vr{w}(0) \cdot \vr{x}|$. Higher $\gamma$ leads to more activation change. In the absence of a $1/\sqrt{d}$ prefactor, the activation change scales inversely with $d$. Including the $1 / \sqrt{d}$ prefactor suppresses this change.}
    \label{fig:scale}
\end{figure*}

Consider the change in $\tilde{f}$ after one step of gradient descent. For a learning rate $\alpha$ and training set size $P$, we update our parameters as
\begin{align*}
    \vr{\theta}(1) &= \vr{\theta}(0) - \alpha \nabla_{\vr{\theta}} \left(\frac{1}{P} \sum_{p=1}^P\mathcal{L}(y_p, \tilde{f}(\vr{x}_p; \vr{\theta}))\right) \\ 
    &= \vr{\theta}(0) - \frac{\alpha}{P} \sum_{p=1}^P (y_p - \sigma(\tilde{f} (\vr{x}_p; \vr{\theta})) \nabla_{\vr{\theta}} \tilde{f}(\vr{x}_p ; \vr{\theta}) \,.
\end{align*}
Note that for an input $\vr{x}$,
\begin{align*}
    \frac{\d \tilde{f}}{\d a_i} &= \frac{1}{\gamma \sqrt{d}} \phi(\vr{w}_i \cdot \vr{x}) \,, \\
    \frac{\d \tilde{f}}{\d \vr{w}_i} &= \frac{1}{\gamma \sqrt{d}} a_i \, \phi'(\vr{w}_i \cdot \vr{x}) \, \vr{x} \,.\\
\end{align*}
Define
\begin{align*}
    \Delta a_i \equiv \frac{1}{P} \sum_{p=1}^P (y_p - \sigma(\tilde{f} (\vr{x}_p ; \vr{\theta}))) \frac{\d \tilde{f}}{\d a_i}\,, \\
    \Delta \vr{w}_i \equiv \frac{1}{P} \sum_{p=1}^P (y_p - \sigma(\tilde{f} (\vr{x}_p ; \vr{\theta}))) \frac{\d \tilde{f}}{\d \vr{w}_i}\,.
\end{align*}
Substituting our weight updates into our model reveals that
\begin{align}
    \tilde{f}(\vr{x}; \vr{\theta} (1)) = \frac{\alpha}{\gamma^2 d} &\sum_{i=1}^m \Big[ \Delta a_i \, \phi(\Delta w_i \cdot \vr{x}) / (\gamma \sqrt{d} ) \nonumber \\
    &+ \Delta a_i \, \phi(\vr{w} (0) \nonumber \cdot \vr{x}) \\
    &+ a_i (0) \,\phi(\Delta \vr{w}_i \cdot \vr{x}) \Big] \,. \label{eq:expanded_output_change}
\end{align}
Observe that $|\Delta a_i \,\phi(\vr{w}(0) \cdot \vr{x})| = \mathcal{O}_d (1)$ and $|a_i(0) \, \phi(\Delta \vr{w}_i \cdot \vr{x})| = \mathcal{O}_d(1 / \sqrt{d})$ for a test point $\vr{x}$. If we adopt a learning rate $\alpha = \gamma^2 d$, then we have overall
\begin{equation} \label{eq:output_scale}
|\tilde{f} (\vr{x}; \vr{\theta}(1))| = \mathcal{O}_d (1) \,.
\end{equation}
In this way, we find that the model output changes by a constant amount relative to the input dimension $d$. Meanwhile, the parameters change by a total magnitude
\[||\vr{\theta}(1) - \vr{\theta}(0)|| = \mathcal{O}_d (\alpha \left( |\Delta a_i| + || \Delta \vr{w}_i || \right) = \mathcal{O}_d (\gamma \sqrt{d}) \,. \]
At initialization, we have that 
\[||\vr{\theta}(0)|| = \mathcal{O}_d (|a_i(0)| + ||\vr{w}_i (0)||) = \mathcal{O}_d (\sqrt{d})\,,\]
so the change in weights relative to the scale of their initialization is simply
\begin{equation} \label{eq:gamma_scaling}
||\vr{\theta}(1) - \vr{\theta}(0) || / ||\vr{\theta}(0)|| = \mathcal{O}_d (\gamma) \,.
\end{equation}
All together, after one gradient step, while the model output changes by a constant amount with respect to $d$, the model parameters change by $\gamma$ relative to their initialization. For $\gamma \rightarrow 0$, the initialization dominates (even as the model output changes), resulting in lazy learning. For increasing $\gamma$, the parameters move proportionally further from their initialization, resulting in progressively rich learning. 

Peculiar to our setting is the additional $1 / \sqrt{d}$ factor in the output scale of the MLP, not found in other rich-regime studies that consider width scaling \citep{yang2021feature_learning,mei2018mean_field,rotskoff2022trainability}. In the absence of the $1 / \sqrt{d}$ factor, Eqs \eqref{eq:expanded_output_change} and \eqref{eq:output_scale} suggest that we should adjust our learning rate to be $\alpha = \gamma^2$ in order to maintain a stable $\mathcal{O}_d(1)$ output change with increasing $d$. However, the relative weight change in Eq \eqref{eq:gamma_scaling} now becomes $\mathcal{O}_d (\gamma / \sqrt{d})$. For fixed $\gamma$, a model becomes lazier as $d$ increases. Hence, to maintain consistent richness, we require an additional $1 / \sqrt{d}$ prefactor on the MLP (along with the corresponding $\alpha = \gamma^2 d$ learning rate).

Figure \ref{fig:scale} illustrates these conclusions. Increasing $\gamma$ increases the change in activations $|\vr{\tilde{w}}(t) \cdot \vr{x}|$ for a test example $\vr{x}$. In the absence of the $1 / \sqrt{d}$ prefactor, increasing $d$ also decreases the change in activations. 

\section{Model and task details} \label{sec:model_task_details}

\renewcommand{\thefigure}{\ref*{sec:model_task_details}\arabic{figure}}
\renewcommand{\theHfigure}{\ref*{sec:model_task_details}\arabic{figure}}
\renewcommand{\theequation}{\ref*{sec:model_task_details}.\arabic{equation}}

\setcounter{figure}{0}
\setcounter{equation}{0}

We enumerate all model and task configurations in this Appendix. Exact details are available in our code, \url{https://github.com/wtong98/equality-reasoning}, which can be run to reproduce all plots in this manuscript.

\subsection{Model}
In all experiments, we use a two-layer MLP without biases that takes inputs $\vr{x} \in \R^d$ and outputs
\[f(\vr{x}) = \frac{1}{\gamma \sqrt{d}} \sum_{i = 1}^m a_i \, \phi(\vr{w}_i \cdot \vr{x}) \,,\]
where $\phi$ is a point-wise ReLU nonlinearity and $\gamma$ is a hyperparamter that governs learning richness. Our MLP is centered using the procedure described in Appendix \ref{sec:rich_and_lazy_scaling}. To produce a classification, $f$ is passed through a standard logit link function
\[\hat{y} = \frac{1}{1 + e^{-f}} \,.\]

Parameters are initialized based on $\mu$P \citep{Yang2022-mup_transfer}. Specifically, we initialize our weights as
\begin{align*}
    a_i &\sim \mathcal{N} \Px{0, 1 / m} \,, \\
    \vr{w}_i &\sim \mathcal{N} \Px{\vr{0}, \vr{I} / m} \,.
\end{align*}
We train the model using stochastic gradient descent on binary cross entropy loss. Following \citet{atanasov2024ultrarich}, we set the learning rate $\alpha$ as $\alpha = \gamma^2 d \,\alpha_0$ for $\gamma \leq 1$ and $\alpha = \gamma\sqrt{d}\,  \alpha_0$ for $\gamma > 1$. The base learning rate $\alpha_0$ is task-specific, and varies from $0.01$ to $0.5$. To measure a model's performance, we train for a large, fixed number of iterations past convergence in training accuracy, and select the best test accuracy from the model's history. 

\subsection{Same-Different}
The same-different task consists of input pairs $\vr{z}_1, \vr{z}_2 \in \R^d$, where $\vr{z}_i = \vr{s}_i + \vr{\eta}_i$. The labeling function $y$ is given by
\[y(\vr{z}_1, \vr{z}_2) = \begin{cases}
    1 & \vr{s}_1 = \vr{s}_2 \\
    0 & \vr{s}_1 \neq \vr{s}_2
\end{cases} \,.\]
We sample these quantities as 
\begin{align*}
    \vr{s} &\sim \mathcal{N}\Px{\vr{0}, \vr{I} / d} \,, \\
    \vr{\eta} &\sim \mathcal{N} \Px{\vr{0}, \sigma \vr{I} / d} \,.
\end{align*}
A training set is sampled such that half the training examples belong to class $1$, and half belong to class $0$. Crucially, the training set consists of $L$ fixed symbols $\vr{s}_1, \vr{s}_2, \ldots, \vr{s}_L$ sampled prior to the experiment. All training examples are constructed from these $L$ symbols. During testing, symbols are sampled afresh, forcing the model to generalize. If the noise variance $\sigma$ is not explicitly stated, then we take it to be $\sigma = 0$. We use a base learning rate $\alpha_0 = 0.1$ with batches of size 128. 

\subsection{PSVRT}
The PSVRT task consists of a single-channel square image with two blocks of bit-patterns. If the bit-patterns match exactly, then the image belongs to the \textit{same} class. If the bit-patterns differ, then the image belongs to the \textit{different} class. Images are flattened before being passed to the MLP.

Images are patch-aligned to prevent overlapping bit-patterns. An image is tiled by non-overlapping square regions which may be filled by bit-patterns. No two bit-patterns may share a single patch. Unless otherwise stated, we use patches that are 5 pixels to a side, and images that are 5 patches to a side, for a total of 25 by 25 pixels.

One important feature of PSVRT is that the inputs do not grow in norm as their dimension increases. Because there are only ever two patches in an image, regardless of its size, the total norm of the input remains constant regardless of the image dimensions. As a result, the $1/\sqrt{d}$ scaling on the MLP output is extraneous for PSVRT, and we remove it in these experiments.

A subset of all possible bit-patterns are used for training. The remaining unseen bit-patterns are used for testing. We use a base learning rate $\alpha_0 = 0.5$ with batches of size 128.

\subsection{Pentomino}
The Pentomino task consists of a single-channel square image with two pentomino shapes. If the shapes are the same (up to rotation, but not reflection), then the image belongs to the \textit{same} class. If the bit-patterns differ, then the image belongs to the \textit{different} class. Images are flattened before being passed to the MLP.

Like before, images are patch-aligned. To provide a border around each pentomino, patches are 7 pixels to a side. Unless otherwise stated, images are 2 patches to a side, for a total of 14 by 14 pixels.

As with PSVRT, the inputs for Pentomino do not grow in norm as their dimension icnreases. There are only ever two pentomino shapes in an image, regardless of its dimension, so the total norm of the input remains constant. Like with PSVRT, we remove the $1/\sqrt{d}$ output scaling on the MLP for these experiments.

There are a total of 18 possible pentomino shapes. A subset of these 18 is held out for testing, and the model trains on the remainder. To improve training stability, mild Gaussian blurs are randomly applied to training images, but not testing images. We use a base learning rate $\alpha_0 = 0.5$ with batches of size 128.

\subsection{CIFAR-100}
The CIFAR-100 same-different task consists of full-color images taken from the CIFAR-100 dataset. Images are 32 by 32 pixels, and depict 1 of among 100 different classes. To form an input example, we place two images side-by-side, forming a larger 64 by 32 pixel image. If the images come from the same class (but are not necessarily the same exact image), the example belongs to the \textit{same} class. If the images come from different classes, the example belong to the \textit{different} class.

To separate an MLP's ability to reason about equality from its ability to extract meaningful visual features, we first pass the image through a VGG-16 backbone pretrained on ImageNet. Activations are then taken from an intermediate layer, flattened, and passed to the MLP. Because VGG-16 activations are coordinate-wise $O(1)$ in magnitude, we normalize them by $1 / \sqrt{d}$ before input to the model. The resulting performance of the MLP from activations of each layer are plotted in Figure \ref{fig:cifar100_layer}.

Of the 100 total classes, a subset is held out for testing, and the model trains on the remainder. We use a base learning rate $\alpha_0 = 0.01$ with batches of size 128.

\begin{figure*}
    \centering
    \includegraphics[width=\linewidth]{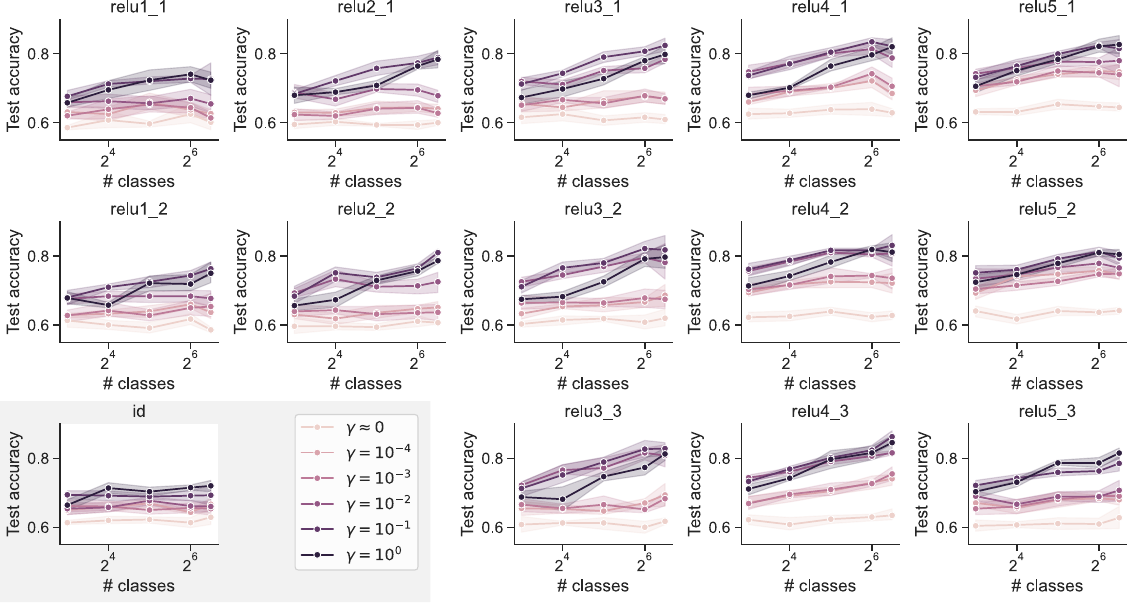}
    \caption{\textbf{CIFAR-100 same-different accuracy across different VGG-16 activations.} Activations are named by \texttt{relu[block]\_[layer]} The plot with name \texttt{id} corresponds to using the raw images directly without first preprocessing in VGG-16. Earlier and later layers demonstrate an interesting collapse where learning richness does not seem to impact classification accuracy very strongly. Intermediate layers suggest that greater learning richness tends to perform better, though the richest model tends to do poorly. Shaded error regions correspond to 95 percent confidence intervals estimated from 6 runs.
    }
    \label{fig:cifar100_layer}
\end{figure*}

\end{document}